\documentclass{article} 
\usepackage{iclr2027_conference,times}

\usepackage{amsmath,amsfonts,bm}

\def\eqref#1{equation~\ref{#1}}

\def\1{\bm{1}}

\DeclareMathAlphabet{\mathsfit}{\encodingdefault}{\sfdefault}{m}{sl}
\SetMathAlphabet{\mathsfit}{bold}{\encodingdefault}{\sfdefault}{bx}{n}

\DeclareMathOperator*{\argmax}{arg\,max}
\DeclareMathOperator*{\argmin}{arg\,min}

\usepackage{hyperref}
\usepackage{cleveref}
\usepackage{url}
\usepackage{graphicx}
\usepackage{caption}
\usepackage{subcaption}
\usepackage{kotex}
\usepackage{amssymb}
\usepackage{mathtools}
\usepackage{float}
\usepackage{wrapfig}
\usepackage{amsmath}
\usepackage{amsthm}
\usepackage{xcolor} 
\usepackage{cancel}
\usepackage{algorithm}
\usepackage{algpseudocode}
\usepackage[dvipsnames]{xcolor}
\usepackage{titletoc}
\usepackage[inkscapelatex=false]{svg}

\definecolor{ourpurple}{HTML}{CB297B}
\definecolor{ourblue}{HTML}{5a82ff}
\definecolor{ourmiddle}{HTML}{66509B}
\definecolor{ourgray}{HTML}{708090}
\definecolor{SphereBlue}{HTML}{87A4DF}

\hypersetup{
  colorlinks=true,
  linkcolor=SphereBlue,
  citecolor=SphereBlue,
  urlcolor=black
}

\Crefname{equation}{Eq.}{Eqs.}
\crefname{equation}{Eq.}{Eqs.}

\Crefname{figure}{Figure}{Figures}
\crefname{figure}{Figure}{Figures}

\newtheorem{assumption}{Assumption}
\Crefname{assumption}{Assumption}{Assumptions}
\crefname{assumption}{assumption}{assumptions}

\Crefname{proposition}{Proposition}{Propositions}
\crefname{proposition}{proposition}{propositions}

\newtheorem{lemma}{Lemma}
\Crefname{lemma}{Lemma}{Lemmas}
\crefname{lemma}{lemma}{lemmas}

\title{Hyperspherical Contrastive Representations\\Unlock Planning and Exploration}

\author{
\textbf{Jonghae Park}\textsuperscript{1}\thanks{Equal contribution.}\;\;
\textbf{Sungil Son}\textsuperscript{1}\footnotemark[1]\;\;
\textbf{Jonghun Shin}\textsuperscript{1}\;\;
\textbf{Junseo Lee}\textsuperscript{1}
\\
\,\!
\textbf{H. Jin Kim}\textsuperscript{1}\;\;
\textbf{Benjamin Eysenbach}\textsuperscript{2}
\\[0.4em]
\textsuperscript{1}Seoul National University\;
\textsuperscript{2}Princeton University
\\[0.2em]
\!
\texttt{\{bdfire1234,aldridge99\}@snu.ac.kr}
}

\iclrfinalcopy 
\begin{document}

\clearpage
\maketitle
\vspace{-1em}
\begin{abstract}
We develop a self-supervised reinforcement learning algorithm that learns to build towers of up to eight cubes from scratch, without prior data, rewards, demonstrations, or pretrained models.
Our method, \textbf{Contrastive Planning via Spherical Linear Interpolation (C-Slerp)}, uses contrastive representations as reusable \textit{building blocks} and repeatedly composes them to infer a sequence of intermediate waypoints.
Under our hyperspherical parameterization, the learned representations allow intermediate waypoints to be inferred in closed form through spherical linear interpolation, without training a separate high-level policy or performing graph search.
Importantly, the representation geometry also tells us how to adjust the number of waypoints based on the difficulty of each goal-reaching problem.
Experiments show that our method succeeds on long-horizon navigation and robotic manipulation tasks, including settings where strong baselines fail to achieve a single success.
Ablations further show that using a fixed number of waypoints substantially degrades performance, highlighting the importance of adapting the plan to task difficulty.
\begin{center}
Project page: 
\texttt{\href{https://jonghaepark.github.io/c-slerp}{\textcolor{magenta}{https://jonghaepark.github.io/c-slerp}}}
\end{center}
\end{abstract}

\begin{figure}[h]
    \centering
    \vspace{-0.0pt}
    \includegraphics[width=0.975\linewidth]{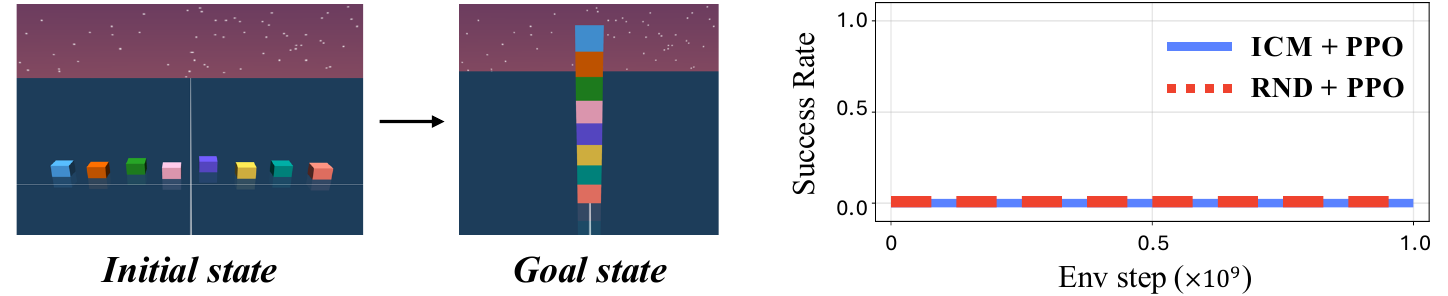}
    \vspace{-5.0pt}
    \caption{\textbf{Our paper studies long-horizon, sparse-reward tasks, including eight-cube stacking.} Given only the current and goal states, the agent aims to manipulate the eight cubes into the goal state without receiving any external reward along the way. Strong exploration methods~\citep{pathak2017curiosity,burda2018exploration} fail to discover a \textit{single} successful state in one billion environment steps.}
    \vspace{-5.0pt}
    \label{fig:fig1}
\end{figure}

\section{Introduction}
Consider a robot asked either to place a single cube or to build an eight-cube tower. The former can be solved directly, whereas the latter may benefit from a \textit{plan} with several intermediate stages.
Too many stages needlessly complicate the single-cube task, while too few would leave the tower-building task too difficult to solve.
The number of planning stages should therefore match the difficulty of the problem.

In goal-conditioned reinforcement learning~\citep{kaelbling1993learning}, these \textit{stages} can be defined by subgoals, or \textit{waypoints}, that divide long-horizon tasks into shorter subproblems. 
However, existing hierarchical approaches often use a fixed number of waypoints~\citep{nasiriany2019planning,zhang2021c} or generate waypoints that are a fixed number of steps into the future~\citep{nachum2018data,park2023hiql}, rather than explicitly selecting how many stages to plan for each problem.

A natural way to plan over arbitrary distances is to define a reusable component and compose it as many times as needed.
Contrastive reinforcement learning (CRL)~\citep{eysenbach2022contrastive} offers a promising foundation for this approach. CRL learns contrastive representations that encode temporal relationships between states under a single geometric distribution, providing natural \textit{building blocks} for multi-stage planning.
While these representations are primarily used to train goal-conditioned policies, we ask whether they can also be leveraged to infer an adaptive number of waypoints.

To this end, we introduce \textbf{Contrastive Planning via Spherical Linear Interpolation (C-Slerp)}, a hierarchical approach where a high-level planner provides waypoints for a low-level policy to follow.
Our main idea is to make this hierarchy \textit{adaptive} by using contrastive representations as building blocks for each stage of a plan.
By repeatedly composing these building blocks, we can score how well the state and goal are connected through different numbers of stages.
Using these scores, we select an appropriate number of stages and infer the corresponding intermediate waypoints.

A key advantage of our contrastive representations is that their spherical geometry enables intermediate waypoints to be inferred in closed form without training a separate high-level policy or performing graph search. Specifically, our high-level policy introduces \textit{no} additional network parameters --- it simply performs spherical linear interpolation in the learned representation space.

In this paper, we show how choosing the correct amount of planning can unlock new capabilities for reinforcement learning. We evaluate our approach on long-horizon navigation and robotic manipulation tasks, including settings in which strong exploration methods never achieve a single success in one billion environment steps (\Cref{fig:fig1}, Appendix~\ref{sec:experimental_details}).
Notably, our method successfully solves challenging benchmark tasks without a goal curriculum or explicit dense rewards, even where all baselines fail completely.
Our ablations further show that hierarchy alone is insufficient: fixing the number of planning stages substantially degrades performance.
Together, these results suggest that a key benefit of hierarchy for goal-reaching may lie in selecting an appropriate amount of planning for each problem rather than in making the policy more expressive.

\section{Related Work}
\paragraph{Hierarchical goal-conditioned reinforcement learning.}
Goal-conditioned reinforcement learning (GCRL) trains a single policy to reach diverse goals~\citep{schaul2015universal,andrychowicz2017hindsight}.
Despite recent progress, reaching temporally distant goals remains a challenging problem.
Hierarchical approaches~\citep{vezhnevets2017feudal, levy2017learning} address this challenge by decomposing long-horizon tasks into shorter subproblems.
Existing methods learn high-level policies that generate subgoals for low-level controllers~\citep{nachum2018data,zhang2020generating,park2023hiql} or infer waypoints to guide policy learning or exploration~\citep{chane-sane2021ris,zhang2021c}.
However, many of these approaches use a fixed number of waypoints or a fixed temporal horizon for subgoals, rather than explicitly selecting how many waypoints to plan for each state-goal pair.
Graph-based methods offer an alternative by searching over observed states or learned landmarks to construct waypoint sequences~\citep{eysenbach2019search,zhang2021l3p}.
Although these methods generate a \textit{variable} number of waypoints, they require additional computation for graph construction and depend on the coverage of states included in the graph.

\paragraph{Understanding the benefits of hierarchy.}
Prior work shows that hierarchy can improve exploration through temporally extended behavior~\citep{nachum2019does} and improve statistical and computational efficiency by reusing solutions to repeated subproblems~\citep{wen2020efficiency}.
For policy learning, HIQL~\citep{park2023hiql} shows that hierarchy provides clearer policy-learning signals, reducing sensitivity to value estimation errors.
Further, SAW~\citep{zhou2026flattening} brings the benefits of hierarchy to \textit{flat} policies through bootstrapping, without a separate subgoal generator.
These findings suggest that the benefits of hierarchy are closely tied to how it structures exploration and policy learning, rather than simply to the expressivity gained from a hierarchical policy structure.

\paragraph{Contrastive representations for goal-reaching.}
Contrastive reinforcement learning (CRL)~\citep{eysenbach2022contrastive} learns representations for goal-reaching in a self-supervised manner, without externally specified rewards.
It formulates goal-conditioned RL as a contrastive learning problem, jointly learning policies and representations that encode temporal relationships between states.
Subsequent work shows that CRL exhibits strong exploration capabilities even when data collection is directed toward a single fixed goal~\citep{liu2025single,bastankhah2026demystifying}, and that increasing the depth of actor and critic networks further improves its goal-reaching performance~\citep{wang20261000}.
Despite these advances, it remains unclear whether exploration capability and network scaling alone are sufficient for CRL to solve substantially longer-horizon tasks.
Beyond policy learning, \citet{eysenbach2024inference} derive closed-form distributions over waypoint representations using contrastive learning.
However, their method learns representations from offline data without policy learning and requires a predefined number of waypoints, rather than adapting it to task difficulty.

Our method, \textbf{C-Slerp}, jointly learns contrastive representations and a goal-conditioned policy online.
Using the learned representations, it adapts the number of waypoints to each state-goal pair and infers their representations in closed form, without a separate high-level policy or graph search.
This allows us to examine how adapting the number of planning stages contributes to the benefits of hierarchy in long-horizon tasks.

\section{Preliminaries}
\label{sec:preliminaries}
We focus on fully online RL without offline data, where the replay buffer is continually updated during policy training and the critic is trained with a self-supervised objective without external rewards.
\paragraph{Goal-conditioned reinforcement learning (GCRL).}
We consider a goal-conditioned Markov decision process
$
\mathcal{M}_{g}
=
\bigl(
\mathcal{S},
\mathcal{A},
p,
p_0,
\gamma,
p_g,
r_g
\bigr),
$
where $\mathcal{S}$ and $\mathcal{A}$ denote the state and action spaces,
$p:\mathcal{S}\times\mathcal{A}\to\Delta(\mathcal{S})$ is the transition
kernel,
$p_0\!\in\!\Delta(\mathcal{S})$ is the initial-state distribution, and
$\gamma\in[0,1)$ is the discount factor.
A goal $g$ is sampled from the distribution
$p_g\!\in\!\Delta(\mathcal{G})$ over the goal space
$\mathcal{G}$, and a goal-conditioned policy
$\pi\!:\!\mathcal{S}\times\mathcal{G}\!\to\!\Delta(\mathcal{A})$
selects an action $a\sim\pi(\cdot\!\mid\! s,g)$ conditioned on the
current state $s$ and desired goal $g$.
For the analysis, we define the goal-reaching reward as $
r_g(s_t,a_t)
\triangleq
(1-\gamma)\,
p\!\left(
s_{t+1}=g
\mid
s_t,a_t
\right),
$
and the corresponding Q-function is
\begin{align}
Q_g^\pi(s,a)
\triangleq
\mathbb{E}_{\pi(\cdot\mid\cdot,g)}
\left[
\sum_{t'=t}^{\infty}
\gamma^{t'-t}
r_g(s_{t'},a_{t'})
\mid
s_t=s,\,
a_t=a
\right].
\label{eq:goal_conditioned_q}
\end{align}
Prior work \citep{eysenbach2022contrastive} has shown that this Q-function
can equivalently be expressed as the probability of reaching the goal under
the discounted state occupancy measure: $Q_g^\pi(s,a) = p_{\pi(\cdot\mid\cdot,g)}^{\gamma}(s_+=g\mid s,a).$
Here,
$p_{\pi(\cdot\mid\cdot,g)}^{\gamma}
(s_+=g\mid s,a)$
denotes the probability of reaching $g$ from $(s,a)$ at a temporal offset
drawn from $\operatorname{\textsc{Geom}}(1-\gamma)$ under the goal-conditioned policy 
\(\pi(\cdot\!\mid\!\cdot,g)\).\footnote{
More generally, for data collected under policies with \textit{different} commanded goals $g$,
we define the goal-averaged discounted occupancy as
$p_{\pi}^{\gamma}
(s_+=g'\mid s,a)
\triangleq
\int_{\mathcal{G}}
p_{\pi(\cdot\mid\cdot,g)}^{\gamma}
(s_+=g'\mid s,a)\,
p(g\mid s,a)\,\mathrm{d}g$.
}
\paragraph{Contrastive reinforcement learning (CRL).}
Instead of relying on Bellman backups, CRL~\citep{eysenbach2022contrastive} learns a critic for the
goal-conditioned Q-function $Q_g^\pi(s,a)$ in a self-supervised manner.
Given a batch $\mathcal{B}=\{(s_i,a_i,g_i)\}_{i=1}^{B}$,
where the goal $g_i$ corresponds to the state $\Delta_i$ steps
after $s_i$ and $\Delta_i\sim\operatorname{\textsc{Geom}}(1-\gamma)$,
the critic $f_{\phi,\psi}(s,a,g)$ is trained using the InfoNCE objective
\begin{equation}
\label{eq:crl-infonce-objective}
\mathcal{L}^{backward}(\phi,\psi;\mathcal{B})
=
-\frac{1}{B}
\sum_{i=1}^{B}
\log
\frac{
\exp\!\left(f_{\phi,\psi}(s_i,a_i,g_i)\right)
}{
\sum_{j=1}^{B}
\exp\!\left(f_{\phi,\psi}(s_j,a_j,g_i)\right)
}.
\end{equation}
Under the backward InfoNCE objective in \Cref{eq:crl-infonce-objective}, the Bayes-optimal critic can be expressed as
\begin{equation}
\label{eq:bayes-optimal-crl-critic}
f^*(s,a,g)
=
\log
\frac{
p_{\pi}^{\gamma}
(s_+=g\mid s,a)
}{
p_{\pi}^{\gamma}(g)\,C(g)
},
\end{equation}
where $p_{\pi}^{\gamma}(g)$
is the future-state marginal and $C(g)>0$ is a normalization term
that depends only on the future state $g$.
Since the denominator is independent of the action,
maximizing the critic $f^*(s,a,g)$ over actions is equivalent to
maximizing the discounted occupancy
$p_{\pi}^{\gamma}(s_+=g\mid s,a)$.
The policy $\pi_\theta(a\mid s,g)$ is therefore trained by maximizing
the critic:
\begin{equation}
\label{eq:crl-policy-objective}
\max_{\theta}\;
\mathbb{E}_{\substack{
s\sim p_{\mathcal D},\,g\sim p^{\gamma}_{\mathcal D}(\cdot\mid s)\\
a\sim\pi_\theta(\cdot\mid s,g)
}}
\left[
f_{\phi,\psi}(s,a,g)
\right],
\end{equation}
where $p_{\mathcal D}$ denotes the state distribution of the replay buffer and
$p^\gamma_{\mathcal D}(\cdot\mid s)$ the distribution of future goals from $s$
in the replay buffer, with temporal offsets drawn from
$\operatorname{\textsc{Geom}}(1-\gamma)$.
\section{Hyperspherical Contrastive Representations for Planning}
\label{sec:recursive_crl}
In this section, we look at planning a sequence of intermediate waypoints between a current state and a goal.
We approach this as an inference problem and show how the waypoints can be obtained in closed form by repeatedly composing contrastive representations.
We then extend this approach to planning over a \textit{variable} number of waypoints, deriving a principled criterion for selecting an appropriate number of waypoints for each state-goal pair using the same learned representations.

\subsection{Problem Formulation}
\label{sec:problem_formulation}

\paragraph{Setup.}
We collect data online under a hierarchical goal-conditioned policy $\hat{\pi}$ that combines a high-level planner with a policy $\pi_\theta$.
Every $h$ environment steps, the planner provides a waypoint
representation $z$ as a subgoal to reach the final goal,
and the policy $\pi_\theta(a\!\mid\! s,z)$ pursues it for the next $h$ steps.
We alternate between collecting data under $\hat{\pi}$ and updating the representations and policy.

We take a probabilistic approach to planning.
Given a state $s$ and goal $g$, we infer $K$ intermediate waypoints $w_{1:K}\!=\!(w_1,\ldots,w_K)$ along trajectories generated by $\hat{\pi}$ with goals drawn from $p_g$.
For notational consistency, we set $w_0=s$ and $w_{K+1}=g$.
We assume that consecutive waypoints $w_i$ and $w_{i+1}$ are
separated by an interval
$T_i\!\sim\!\operatorname{\textsc{Geom}}(1-\gamma)$,
drawn independently for $i=0,\ldots,K$.

\paragraph{Objective.}
Let $p_{\hat{\pi}}^{\gamma,K}(w_{1:K}\mid s,g)$ denote the distribution over $K$ intermediate waypoints given $s$ and $g$ under the data-generating process described above.
We approximate this posterior with a tractable distribution
$q(w_{1:K}\mid s,g)$ by minimizing the following KL divergence:
\begin{equation}
q^\star
=
\argmin_q
D_{\mathrm{KL}}
\left(
q(\cdot\mid s,g)
\;\middle\|\;
p_{\hat{\pi}}^{\gamma,K}(\cdot\mid s,g)
\right).
\label{eq:recursive_objective}
\end{equation}
For now, we fix $K$ and derive the corresponding optimal
waypoints. We then derive a criterion for adaptively selecting $K$
for each state-goal pair.

\subsection{Hyperspherical Contrastive Representations}
\label{sec:planning_representations}

To make the planning objective in \Cref{eq:recursive_objective} tractable, we parameterize temporal relationships using contrastive representations that lie on the unit hypersphere $\mathbb{S}^{d-1}$.
Our approach learns a state-only representation $\psi(s)\in\mathbb{S}^{d-1}$, a state-action representation $\phi(s,a)\in\mathbb{S}^{d-1}$, a rotation matrix $R$, and a policy $\pi_\theta(a\mid s,z)$, where $z\in\mathbb{S}^{d-1}$ is a representation of a goal.
We use these representations to infer intermediate waypoints for the policy $\pi_\theta$ to pursue.

\paragraph{Parameterization.}
We use $f_H(s,s^+)$ to score temporal relationships between states and
$f_L(s,a,s^+)$ to score future states given a current state-action pair.
To infer consecutive waypoints in a common latent space, $f_H$ uses
the \textit{same} encoder $\psi$ for both inputs.
Following the inner-product form used in CRL, we could score
state pairs using $f_H(u,v)=\psi(u)^\top\psi(v)$.
However, since this inner product is \textit{symmetric} in $u$ and $v$,
it cannot capture asymmetries in the dynamics.
We thus introduce a learnable rotation matrix $R$ to model these asymmetries while preserving the hyperspherical geometry.
Specifically, we parameterize $f_H$ and $f_L$ as
\begin{equation}
f_H(s,s^+) = \psi(s)^\top R\psi(s^+),
\qquad
f_L(s,a,s^+) = \phi(s,a)^\top\psi(s^+),
\end{equation}
where $R\in\mathbb{R}^{d\times d}$ satisfies $R^\top R=I$.\footnote{
Given a \textit{learnable} matrix $A\in\mathbb{R}^{d\times d}$, we define the
skew-symmetric matrix $\Omega=A-A^\top$ and apply the Cayley transform
$R=(I+\Omega)^{-1}(I-\Omega)$, which satisfies $R^\top R=I$ and thus
$\|Rx\|_2=\|x\|_2$ for all $x\in\mathbb{R}^d$.
}
This allows us to score an entire waypoint sequence by repeatedly applying a \textit{single} $f_H$ to consecutive waypoint pairs.
For $f_L$, we follow the standard CRL parameterization and train
the policy to maximize $f_L(s,a,s^+) =\phi(s,a)^\top\psi(s^+)$
as in \cref{eq:crl-policy-objective}.
The corresponding learning objectives are described in
\Cref{sec:representation_policy_learning}.

\subsection{Closed-Form Waypoint Inference}
\label{sec:closed_form_waypoint_inference}

\paragraph{Key assumptions.}
Our derivation relies on the following assumptions, with theoretical justification based on prior work~\citep{wang2020understanding,eysenbach2024inference} provided in Appendix~\ref{app:assumption}:
\begin{assumption}[Bayes optimality]
\label{as:critic}
The learned $f_H$ under the symmetric InfoNCE objective has the following Bayes-optimal form\textup{:}
\begin{equation}
f_H(u,v)
=
\log
\frac{
p_{\pi}^{\gamma}(u^+ = v\mid u)
}{
p_{\pi}^{\gamma}(v)\, C
},
\end{equation}
where $\pi$ is the data-collection policy and $C>0$ is a constant independent of $u$ and $v$.
\end{assumption}

\begin{assumption}[Uniformity]
\label{as:uniform}
The marginal distribution over the representations learned for $f_H$ under the symmetric InfoNCE objective is uniform on the unit hypersphere $\mathbb{S}^{d-1}$\textup{:}
\begin{equation}
p_\pi^\gamma(\psi)
\triangleq
\int
p_\pi^\gamma(v)\,
\delta\!\left(\psi-\psi(v)\right)
\,\mathrm{d}v
=
\operatorname{Unif}\!\left(\mathbb{S}^{d-1}\right).
\end{equation}
\end{assumption}

\paragraph{Tractable reformulation.}
To obtain a tractable objective, we reformulate \Cref{eq:recursive_objective} in terms of the learned representations.
We first factorize $q$ as $q(w_{1:K}\!\mid\! s,g)=\prod_{i=1}^{K}q_i(w_i\!\mid\! s,g)$, with one distribution $q_i$ per waypoint $w_i$.
This allows us to optimize $q$ for different $K$ within the same formulation below.
Under \Cref{as:critic} and the Markov assumption, the objective can be equivalently written as follows, with the derivation provided in Appendix~\ref{app:reformulation}:
\begin{align}
q^\star
&=
\argmin_q
D_{\mathrm{KL}}
\left(
q(\cdot\mid s,g)
\;\middle\|\;
p_{\hat{\pi}}^{\gamma,K}(\cdot\mid s,g)
\right)
\notag\\
&=
\argmax_q
\mathbb{E}_{w_{1:K}\sim q(\cdot\mid s,g)}
\left[
\sum_{i=0}^{K} f_H(w_i,w_{i+1})
\right]
-
\sum_{i=1}^{K}
D_{\mathrm{KL}}
\left(
q_i(\cdot\mid s,g)
\;\middle\|\;
p_{\hat{\pi}}^\gamma(\cdot)
\right)
\label{eq:waypoint-critic-objective}
\end{align}
This reformulation provides a natural interpretation of our objective in \Cref{eq:recursive_objective}: it maximizes the expected sum of $f_H$, which scores temporal relationships between waypoints, while encouraging each waypoint distribution $q_i$ to remain close to the marginal distribution of states visited under $\hat{\pi}$.

However, inferring waypoints directly in the high-dimensional state space is computationally challenging.
Since the policy $\pi_\theta(a\!\mid\! s,z)$ depends on the waypoint $w$ only through its representation $z=\psi(w)$, inferring its \textit{representation} is sufficient to provide a subgoal to the policy.
We therefore infer distributions $q_i(z_i\!\mid\! s,g)$ over representations $z_i\!=\!\psi(w_i)$ rather than the raw states $w_i$.
\Cref{as:uniform} and the parameterization of $f_H$ then yield the tractable objective $\mathcal{J}(q)$ to maximize:
\begin{equation}
\mathcal{J}(q)
=
\mathbb{E}_{z_{1:K}\sim q(\cdot\mid s,g)}
\Bigg[
\psi(s)^\top R z_1
+
\sum_{i=2}^{K}
z_{i-1}^\top R z_i
+
z_K^\top R\,\psi(g)
\Bigg]
+
\sum_{i=1}^{K}
\mathcal{H}(q_i(\cdot \mid s,g)),
\label{eq:latent_objective}
\end{equation}
where $\mathcal{H}$ denotes entropy.
This objective links the current state $s$ to the goal $g$ by repeatedly aligning consecutive representations $z_i$ through the rotation matrix $R$, while the entropy term encourages stochasticity in the sequence of representations.
The full derivation is provided in Appendix~\ref{app:reformulation}.

\paragraph{Closed-form solution.}
To model the stochasticity encouraged by $\mathcal{J}(q)$, we parameterize each $q_i(\cdot\mid s,g)$ as the distribution of a Gaussian sample normalized onto the unit hypersphere:
\[
\tilde z_i \sim \mathcal{N}\!\left(\mu_i(s,g),\,\sigma_i^2(s,g)I\right),
\qquad
z_i = \frac{\tilde z_i}{\|\tilde z_i\|_2}
\sim q_i(\cdot\mid s,g),
\]
with $\|\mu_i(s,g)\|_2=1$.
As in SAC~\citep{haarnoja2018soft}, we weight the entropy term in \Cref{eq:latent_objective} by $\alpha>0$ to control stochasticity, with $\alpha=1$ recovering the original objective.
Under a small-variance approximation, we obtain the following optimal solution for maximizing $\mathcal{J}(q)$:
\begin{equation}
\left\{
\begin{aligned}
\mu_{i}^{\,*}(s,g) \:&= (R^T)^{i} \; \mathrm{slerp}\!\left(\psi(s),\, R^{K+1}\psi(g),\, \tfrac{i}{K+1}\right),\\[6pt]
\:\:\sigma_i^{*2}(s,g) &=
\frac{\alpha}{
2 \cos(\frac{\theta}{K+1})
}
\end{aligned}
\right.
\quad
\text{for } i = 1,\dots,K,
\label{eq:closed_form_high_policy}
\end{equation}
where $\theta=\arccos(\psi(s)^\top R^{K+1}\psi(g))$
and $\operatorname{slerp}$ denotes spherical linear interpolation. The derivation is provided in Appendix~\ref{app:slerp}.

When $R=I$, the solution places all means $\mu_{i}$ evenly along the \textit{geodesic} between $\psi(s)$ and $\psi(g)$ on the unit hypersphere.
In contrast, our method accounts for temporal transitions encoded in $R$ by interpolating from $\psi(s)$ toward the \textit{rotated} goal $R^{K+1}\psi(g)$ and mapping each interpolated point back using $(R^\top)^i$ (see \Cref{fig:fig2} for a geometric interpretation).
Importantly, these closed-form expressions allow us to infer distributions over waypoint representations \textit{directly} from the learned contrastive representations without training an additional neural network.

\begin{figure}[t]
    \centering
    \vspace{-20pt}
    \includegraphics[width=0.98\linewidth]{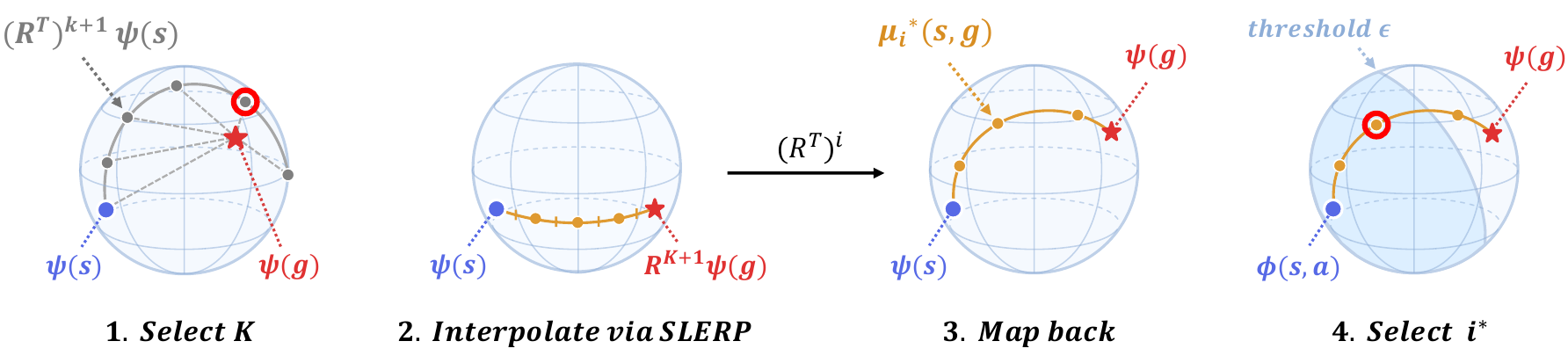}
    \vspace{-6pt}
    \caption{\textbf{Geometric interpretation of C-Slerp.}
    The number of waypoints $K$ is selected using the alignment between $(R^\top)^{k+1}\psi(s)$ and $\psi(g)$.
    Given the selected $K$, waypoint mean directions are obtained by spherical interpolation from $\psi(s)$ to $R^{K+1}\psi(g)$, followed by mapping back via $(R^\top)^i$.
    The policy then pursues the furthest waypoint whose alignment with $\phi(s,a)$ meets the threshold $\epsilon$.}
    \vspace{-0pt}
    \label{fig:fig2}
\end{figure}

\subsection{How Many Waypoints Are Needed for Each State-Goal Pair?}
\label{k-selection}
Since the closed-form solution in \Cref{eq:closed_form_high_policy} applies to any number of waypoints $K$, we now consider how to select $K$ for each state-goal pair.
Recall that $f_H(s,g)$ measures alignment between $R^\top\psi(s)$
and $\psi(g)$ to score a single geometric transition from $s$ to $g$.
Extending this to $K+1$ geometric transitions, we define the following
alignment score between $(R^\top)^{K+1}\psi(s)$ and $\psi(g)$:
\begin{equation}
\ell_K(s,g)
\triangleq
\left((R^\top)^{K+1}\psi(s)\right)^\top\psi(g)
=
\psi(s)^\top R^{K+1}\psi(g).
\label{eq:k_selection_score}
\end{equation}
This score measures how well the goal representation aligns with the direction predicted from the current state after $K+1$ geometric transitions under $\hat{\pi}$.
A larger score indicates that the corresponding number of waypoints is more compatible with the state-goal pair.
To infer $K$ probabilistically, we model $\psi(g)$ with a
von Mises--Fisher likelihood centered on the predicted direction $(R^\top)^{K+1}\psi(s)$:
\begin{equation}
p(\psi(g)\mid\psi(s),K)
\triangleq
C_d(\kappa)\exp\!\left(\kappa\ell_K(s,g)\right),
\label{eq:k_selection_likelihood}
\end{equation}
where $\kappa>0$ is the concentration parameter and $C_d(\kappa)$ is the normalizing constant.
With a uniform prior over $K\in\{0,\ldots,K_{\max}\}$, Bayes' rule yields
\begin{equation}
p(K\mid\psi(s),\psi(g))
=
\operatorname{Softmax}_{K\in\{0,\ldots,K_{\max}\}}
\left(
\frac{\ell_K(s,g)}{\eta}
\right),
\label{eq:k_selection_distribution}
\end{equation}
where $\eta=1/\kappa$ controls the sharpness of the distribution over $K$.
This allows us to infer the number of waypoints for each state-goal pair using only the learned contrastive representations.
The theoretical justification for the alignment score $\ell_K$ and the full derivation are provided in Appendix~\ref{app:k-selection}.

\paragraph{Which waypoint should the policy pursue?}
After selecting $K$, we can sample waypoint representations $z_{1:K}$ from the closed-form distributions in \Cref{eq:closed_form_high_policy}.
The remaining question is \textit{which} waypoint representation $z_i$ to provide to the policy, given its current goal-reaching capabilities.

Our learned $f_L(s,a,g)=\phi(s,a)^\top\psi(g)$ provides a natural proxy for the goal-reaching capability of the current policy, based on its probabilistic interpretation in \Cref{eq:bayes-optimal-crl-critic}.
Since $z_i$ lies in the same space as $\psi(g)$, we use its alignment with $\phi(s,a)$ to assess how likely the policy is to reach it.
For each candidate $z_i$, including $z_{K+1}=\psi(g)$, we sample $a_i\sim\pi_\theta(\cdot\mid s,z_i)$ and compute $\rho_i=\phi(s,a_i)^\top z_i$.
We select the waypoint with the largest index whose score meets a threshold $\epsilon\in[-1,1]$:
\begin{equation}
i^*
=
\max\left\{
i\in\{1,\ldots,K+1\}:\rho_i\ge\epsilon
\right\}.
\label{eq:waypoint_index_selection}
\end{equation}
If no candidate meets the threshold, we set $i^*=1$.
This allows the policy $\pi_\theta(a\mid s,z_{i^*})$ to make progress toward the goal by pursuing the furthest waypoint estimated to be reachable, avoiding overly conservative subgoals.
We provide a geometric interpretation of selecting $K$ and $i^*$ in \Cref{fig:fig2} and visualizations of the $\rho_i$ values in \Cref{fig:waypoint_energy}.

\subsection{Learning Objectives and Algorithm}
\label{sec:representation_policy_learning}

\paragraph{Representation learning.}
To learn the representations used for planning, we sample a batch $\mathcal{B}=\{(s_i,a_i,s_i^+)\}_{i=1}^{B}$ from the replay buffer, with the temporal interval between $s_i$ and $s_i^+$ drawn from $\operatorname{\textsc{Geom}}(1-\gamma)$.
Using this batch, we train $f_L$ with the backward InfoNCE objective in~\cref{eq:crl-infonce-objective} and $f_H$ with the symmetric InfoNCE objective defined in~\Cref{eq:symmetric-infonce-objective}.
The total representation loss is
\begin{equation}
\label{eq:repr_loss}
\mathcal{L}_{\mathrm{repr}}
=
\mathbb{E}_{\mathcal{B}}
\left[
\mathcal{L}^{\mathrm{backward}}(\phi,\psi;\mathcal{B})
+
\lambda_{\mathrm{repr}}
\mathcal{L}^{\mathrm{symmetric}}(R,\psi;\mathcal{B})
\right],
\end{equation}
where the expectation is over batches $\mathcal{B}$ sampled as described above.

\paragraph{Policy learning.}
Following CRL, we train the policy $\pi_\theta$ to maximize
$f_L(s,a,g)=\phi(s,a)^\top\psi(g)$:
\begin{equation}
\max_\theta\;
\mathbb{E}_{\substack{
s\sim p_{\mathcal D},\,g\sim p^{\gamma}_{\mathcal D}(\cdot\mid s)\\
a\sim\pi_\theta(\cdot\mid s,\psi(g))
}}
\left[\phi(s,a)^\top\psi(g)\right].
\label{eq:policy_objective}
\end{equation}
Specifically, the policy $\pi_\theta(a\mid s,\psi(g))$ is conditioned on the goal representation $\psi(g)$, enabling it to pursue waypoints $z$ inferred in the representation space induced by $\psi$.

\paragraph{Algorithm.}
Every $h$ environment steps, the high-level planner determines the number of waypoints for the current state-goal pair and selects a waypoint as a subgoal for the policy.
Algorithm~\ref{alg:aplanning} provides the full pseudocode, including data collection and policy and representation learning.

\section{Experiments} \label{sec:experiments}
\begin{figure}[t]
    \centering
    \vspace{-5pt}
    \includegraphics[width=0.98\linewidth]{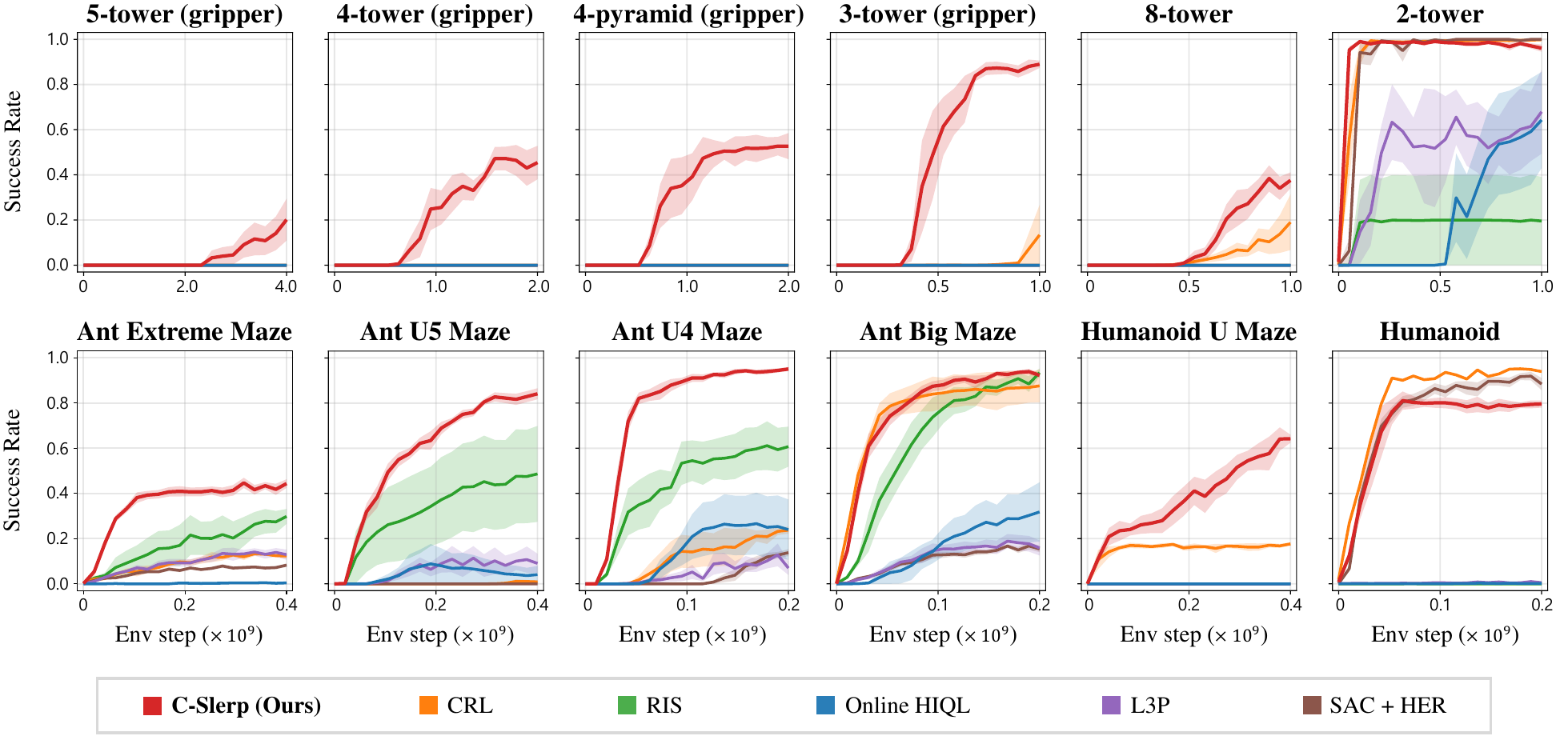}
    \vspace{-2.5pt}
    \caption{\textbf{C-Slerp solves long-horizon tasks where baselines never achieve a single success.}
    Learning curves show success rates on BuilderBench manipulation tasks
    (\textit{top}) and JaxGCRL locomotion tasks (\textit{bottom}).
    C-Slerp achieves the highest success rates on most tasks across both benchmarks and is the \textit{only} method to succeed on all cube-stacking tasks with a gripper.}
    \vspace{-10pt}
    \label{fig:fig3}
\end{figure}

\paragraph{Benchmarks.}
We evaluate C-Slerp on six locomotion tasks in JaxGCRL~\citep{bortkiewicz2025accelerating} and six manipulation tasks in BuilderBench~\citep{ghugare2025builderbench}.
The JaxGCRL tasks cover Ant and Humanoid locomotion, including navigation through mazes of varying sizes to test goal-reaching capability over different horizons.
In BuilderBench, the agent arranges cubes into target configurations using a gripper or applying forces \textit{directly} to the cubes~\citep{yanscaling}.
The number of cubes determines the difficulty of these tasks, with more cubes requiring longer horizons.
Using a \textit{gripper} extends these horizons further, as each placement requires additional interactions to grasp and release the cube.
Further details on both benchmarks are provided in Appendix~\ref{sec:environments}, with environment visualizations in Figure~\ref{fig:env_explanation}.

\paragraph{Baselines.}
We compare \textbf{C-Slerp} with five GCRL methods:
As a baseline using contrastive representations, we include \textbf{CRL}~\citep{eysenbach2022contrastive}, which learns representations of state-action pairs and future states and trains a policy to maximize their alignment.
To compare different uses of hierarchy, we include \textbf{RIS}~\citep{chane-sane2021ris} and \textbf{Online HIQL}~\citep{park2023hiql}.
RIS uses imagined subgoals to regularize a flat policy during training, whereas Online HIQL uses a high-level policy to produce latent subgoals that a low-level policy pursues.
For this comparison, we adapt HIQL, originally designed for offline reinforcement learning, to the online setting.
For graph-based planning, we include \textbf{L3P}~\citep{zhang2021l3p}, which constructs a graph of latent landmarks connected by reachability estimates and searches it to plan waypoint sequences.
Finally, \textbf{SAC + HER}~\citep{haarnoja2018sac,andrychowicz2017hindsight} is a standard GCRL baseline that relabels goals with achieved states to learn from unsuccessful trajectories. Implementation details and hyperparameter settings are provided in Appendix~\ref{sec:implementation_details}.

\subsection{Experimental Results}
\paragraph{Benchmark Performance.}
Figure~\ref{fig:fig3} compares C-Slerp with five baselines across twelve benchmark tasks.
C-Slerp achieves the highest success rates on most tasks across both benchmarks.
In BuilderBench, the agent must learn to construct target structures without a goal curriculum or intermediate rewards.
Building taller towers requires more placements, thus leading to longer horizons and making exploration more challenging.
The results show that C-Slerp and CRL perform comparably on shorter-horizon tasks such as the 2-tower task, but the performance gap between C-Slerp and the baselines widens as the number of cubes increases or when a gripper is used.
Notably, C-Slerp is the \textit{only} method to succeed on all cube-stacking tasks with a gripper, while most baselines fail to achieve any success.

In navigation tasks, several baselines also perform comparably on shorter-horizon tasks such as Ant Big Maze and Humanoid.
As the required horizon increases, however, C-Slerp achieves consistently higher success rates.
These results suggest that C-Slerp not only enables long-horizon planning but also facilitates \textit{exploration}, allowing the agent to discover successful behaviors more efficiently by decomposing challenging tasks into easier subproblems.

\begin{figure*}[h]
    \centering
    \begin{minipage}[t]{0.5\textwidth}
        \centering
        \vspace{-2.5pt}
        \includegraphics[width=0.99\linewidth]{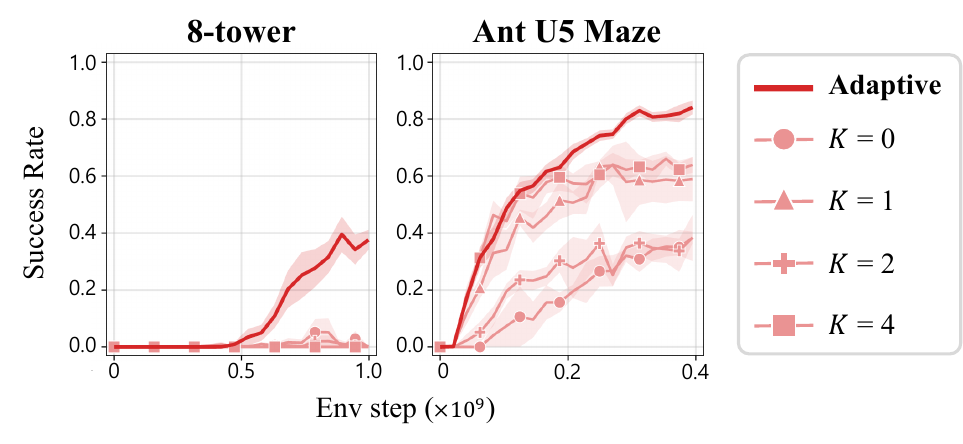}
        \vspace{-15pt}
        \caption{Ablation on $K$-selection}
        \vspace{-17.5pt}
        \label{fig:fig4}
    \end{minipage}%
    \begin{minipage}[t]{0.5\textwidth}
        \centering
        \vspace{-2.5pt}
        \includegraphics[width=0.99\linewidth]{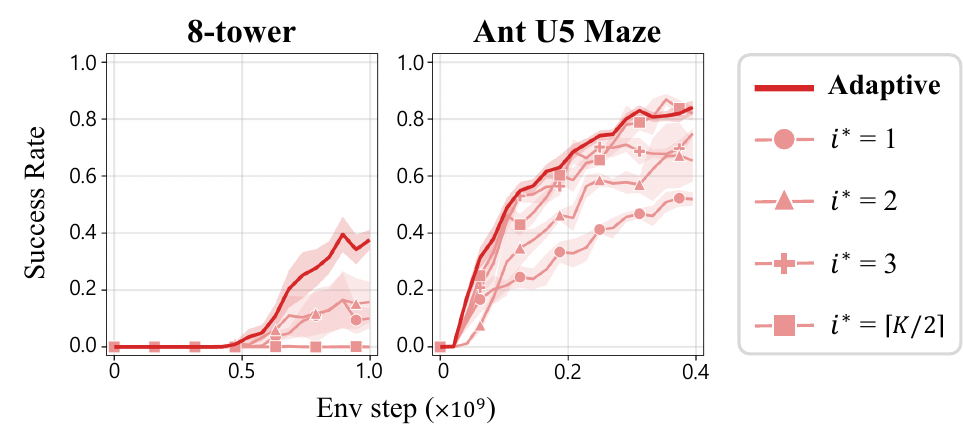}
        \vspace{-15pt}
        \caption{Ablation on $i$-selection}
        \vspace{-17.5pt}
        \label{fig:fig5}
    \end{minipage}
\end{figure*}

\paragraph{Is \textit{adaptive} waypoint planning necessary for long-horizon tasks?}
We first examine whether adapting the number of waypoints $K$ (as described in \Cref{k-selection}) improves performance over using a fixed number.
\Cref{fig:fig4} compares adaptive and fixed choices of $K$ on the 8-tower task and Ant U5 Maze.
For each fixed-$K$ variant, the number of waypoints remains constant throughout the episode.
For $K=0$, no intermediate waypoints are used, and thus the policy is always conditioned directly on the goal representation $\psi(g)$.
The results show that adaptive selection achieves the highest success rates in both environments.
The fixed-$K$ variants make little progress on the 8-tower task and reach lower final performance on Ant U5 Maze.
Furthermore, increasing the fixed number of waypoints does not consistently improve performance, suggesting that simply introducing \textit{more} intermediate waypoints is insufficient.

These results suggest that the number of waypoints should \textit{match} the difficulty of the goal-reaching problem.
Too few waypoints can make each subproblem too difficult for the policy to solve, whereas too many may slow progress toward the goal by introducing unnecessary planning stages.
Moreover, the appropriate number of waypoints can also change \textit{within} an episode as the agent approaches the goal (see~\Cref{fig:KI_selection,fig:KI_selection_2}).
Consequently, a fixed number of waypoints that works well early in an episode may become unnecessarily large as the remaining task becomes easier.
Adaptive selection accounts for this change by adjusting $K$ to the current state-goal pair, highlighting that the benefit of C-Slerp comes not only from \textit{introducing} intermediate waypoints but also from \textit{adapting} their number to the problem.

We next examine how to select the waypoint for the policy to pursue.
Figure~\ref{fig:fig5} compares adaptive waypoint selection with fixed choices of $i$, while keeping the selection of $K$ adaptive.
Among the fixed selection rules, $i=1$ and $i=2$ perform better on the 8-tower task, whereas $i = \lceil K/2 \rceil$ and $i=3$ perform better on Ant U5 Maze.
Thus, a choice that works well in one environment may not be the best choice in another.
Earlier waypoints (e.g., $i=1$) provide more conservative subgoals but may slow progress, whereas later waypoints can enable greater progress but may be too difficult for the current policy to reach.
Adaptive selection uses the learned $f_L$ to make this choice and achieves the best performance in both environments (see~\Cref{fig:waypoint_energy} for a visualization of the selected waypoints).
Together, these ablations show that adaptive selection of both $K$ and $i$ is needed to achieve the full performance of C-Slerp on long-horizon tasks.

\begin{figure*}[ht]
    \centering
    \begin{subfigure}{0.365\textwidth}
        \centering
        \vspace{-15pt}
        \includegraphics[width=\linewidth]{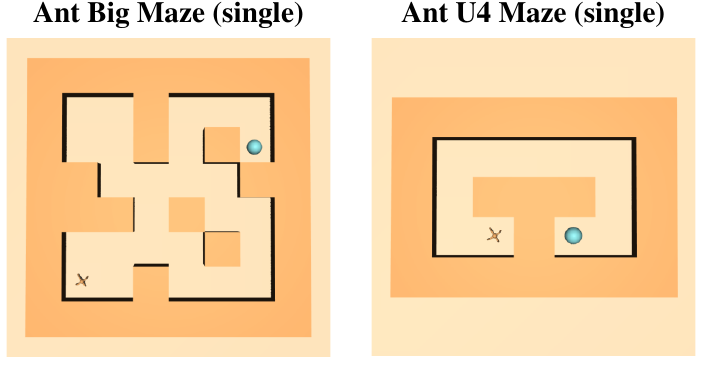}
        \caption{Single-goal environments}
        \label{fig:structure_targets}
    \end{subfigure}
    \hspace{0.05\textwidth}
    \begin{subfigure}{0.512\textwidth}
        \centering
        \vspace{-15pt}
        \includegraphics[width=\linewidth]{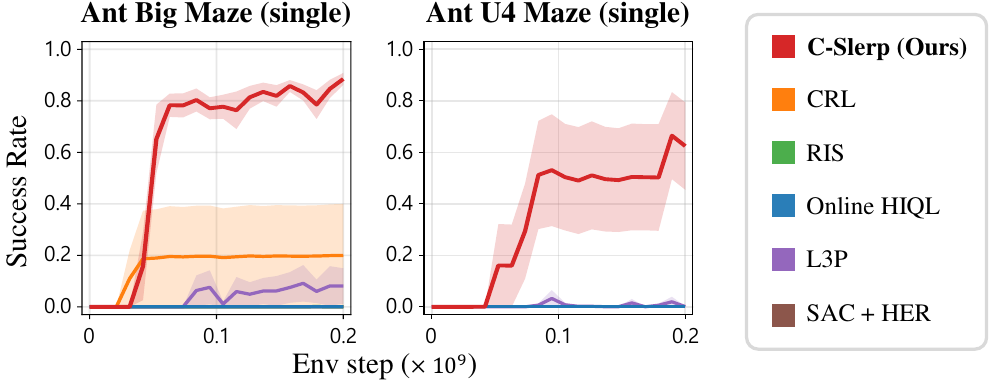}
        \caption{Learning curves}
        \label{fig:structure_results}
    \end{subfigure}
    \caption{\textbf{Evaluation of C-Slerp in single-goal environments.} To assess exploration capability, we evaluate C-Slerp with a single fixed goal throughout training. The environments and their fixed goals are shown in (a), and success rates for C-Slerp and the baselines are presented in (b).}
    \vspace{-0pt}
    \label{fig:singlegoal}
\end{figure*}

\paragraph{Exploration in single-goal settings.}
We further evaluate C-Slerp in a single-goal setting, where a \textit{single} fixed goal is provided throughout training and evaluation, following the setting considered in prior work~\citep{liu2025single,bastankhah2026demystifying}.
As shown in Figure~\ref{fig:singlegoal}, we modify the goal distributions of Ant Big Maze and Ant U4 Maze by fixing the goal to the most \textit{distant} location in each maze environment.
C-Slerp achieves higher success rates than all baselines, with all baselines except CRL and L3P achieving zero success.
These results suggest that C-Slerp can facilitate \textit{directed} exploration by producing intermediate waypoints that guide the agent toward distant goals, even when only a single fixed goal is provided.

\begin{figure}[h]
    \centering
    \vspace{7.5pt}
    \includegraphics[width=1.00\linewidth]{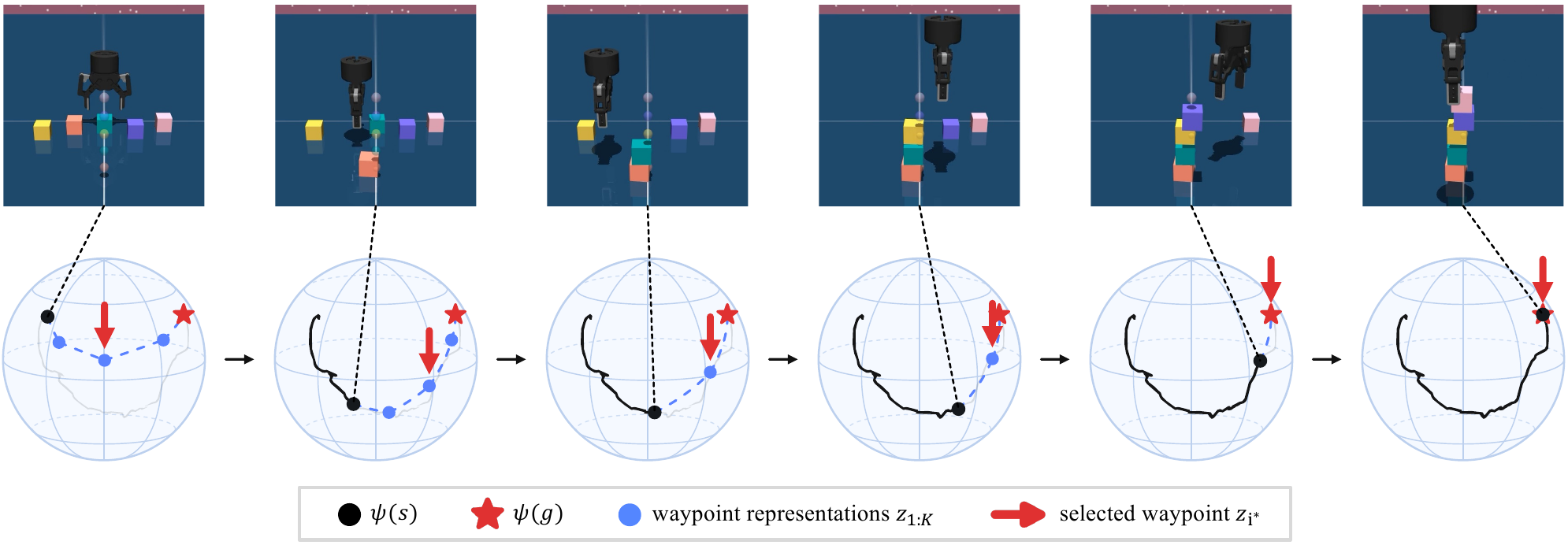}
    \vspace{-12.5pt}
    \caption{\textbf{Adaptive planning in C-Slerp.}
    The top row shows successive states in the 5-tower task with a gripper.
    The bottom row shows spherical PCA~\citep{liu2019spherical} projections of the latent trajectory, inferred waypoints, and selected subgoal.
    As the agent approaches the goal, it \textit{replans} by adjusting the number of waypoints and selecting a waypoint to pursue. We provide complete rollout visualizations in \Cref{fig:rollout_1,fig:rollout_2} and videos on our \href{https://jonghaepark.github.io/c-slerp}{\textcolor{magenta}{project page}}.}
    \vspace{-0.0pt}
    \label{fig:fig7}
\end{figure}

\paragraph{How do the inferred waypoints relate to task progress?}
\Cref{fig:fig7} shows successive states in the 5-tower task with a gripper, alongside projections of their representations and inferred waypoints.
As the agent builds the tower, its state representation $\psi(s)$ progresses along the latent trajectory toward the goal representation $\psi(g)$.
At each replanning step, C-Slerp updates the number of intermediate waypoints $K$, infers their representations, and selects one for the policy to pursue.
Notably, the selected waypoint is not always the \textit{first} in the inferred plan, which is consistent with our adaptive waypoint selection strategy described in \Cref{k-selection}.
We can also observe that fewer waypoints are used as the tower nears completion, suggesting that the selected number of waypoints $K$ reflects the remaining temporal distance to the goal.
Together, these visualizations show how C-Slerp guides the policy toward the goal by adapting the number of waypoints and selecting a subgoal in the learned representation space.
Additional rollout visualizations are provided in Appendix~\ref{app:add_exp}.

\section{Conclusion}

We introduce C-Slerp, an online goal-reaching method that uses hyperspherical contrastive representations for adaptive planning.
These representations enable closed-form waypoint inference through spherical linear interpolation, without a separate high-level network or graph search.
Experiments show that C-Slerp solves long-horizon navigation and manipulation tasks, including settings where none of the baselines achieve any success.
Ablations further show that adaptively selecting both the number of waypoints and the waypoint for the policy to pursue improves performance over fixed choices. These results suggest that adapting task decomposition to the goal-reaching problem contributes to the benefits of hierarchy in goal-conditioned reinforcement learning.

Our approach has limitations.
Like other self-supervised RL methods, C-Slerp requires extensive environment interaction for training, which can be costly to collect on real robots.
This motivates extending C-Slerp to an offline RL setting, where it could learn from offline data without additional environment interaction.
Also, our experiments use state observations in simulation, leaving learning from visual observations and deployment on real robots untested.
Improving data efficiency and extending the method to pixel-based observations are important directions for future work.

\paragraph{Final remarks.}
In this paper, we show how choosing an appropriate amount of planning can unlock new capabilities for reinforcement learning.
C-Slerp learns to stack five cubes with a gripper without a goal curriculum or dense rewards, while all evaluated baselines fail to achieve a single success.
These findings suggest that adapting how tasks are decomposed is a promising direction for extending online goal-conditioned reinforcement learning to more challenging problems.
Additionally, although C-Slerp can be viewed as a hierarchical goal-conditioned RL method, it achieves these benefits \textit{without} a separate high-level policy network or a planning graph.
Instead, planning follows directly from spherical linear interpolation in the learned representation space, showing that adaptive planning need not require a complex architecture.

\subsubsection*{Acknowledgments}
We thank Mahsa Bastankhah and Royina Jayanth for their helpful feedback on the manuscript. We also thank members of the Lab for Autonomous Robotics Research (LARR) and the Princeton RL Lab for insightful discussions.

\newpage
\bibliography{iclr2027_conference}
\bibliographystyle{iclr2027_conference}

\appendix

\clearpage
\appendix
\startcontents[appendix]

\begingroup
\color{black}
\hypersetup{hidelinks}

\section*{\Large Appendix}
\printcontents[appendix]{}{1}[2]{%
    \linespread{1.2}\selectfont
}

\endgroup

\clearpage 
\section{Theoretical Results}

\subsection{Justification of the key assumptions}
\label{app:assumption}

\newtheorem*{bayesassumption}{Assumption~\ref{as:critic}}
\begin{bayesassumption}[Bayes optimality]
The learned $f_H$ under the symmetric InfoNCE objective has the following Bayes-optimal form\textup{:}
\begin{equation}
f_H(u,v)
=
\log
\frac{
p_{\pi}^{\gamma}(u^+ = v\mid u)
}{
p_{\pi}^{\gamma}(v)\, C
},
\label{eq:bayesform}
\end{equation}
where $\pi$ is the data-collection policy and $C>0$ is a constant independent of $u$ and $v$.
\end{bayesassumption}

The symmetric InfoNCE objective for a batch
$\mathcal{B}=\{(s_i,a_i,s_i^+)\}_{i=1}^{B}$ is defined as
\begin{equation}
\mathcal{L}^{\mathrm{symmetric}}(R,\psi;\mathcal{B})
\!\triangleq
-\frac{1}{B}\!\sum_{i=1}^{B}\!
\left[
\log
\frac{\exp(f_H(s_i,s_i^+))}
{\sum_{j=1}^{B}\exp(f_H(s_i,s_j^+))}
\!+\!
\log
\frac{\exp(f_H(s_i,s_i^+))}
{\sum_{j=1}^{B}\exp(f_H(s_j,s_i^+))}
\right]\!\!.
\label{eq:symmetric-infonce-objective}
\end{equation}

The Bayes-optimal form in \Cref{eq:bayesform} follows directly from Lemma~5 of \citet{eysenbach2024inference}.
Their formulation uses denominators without resubstitution~\citep{poole2019variational,yeh2022decoupled}, excluding the positive pair from each denominator.
Since $|f_H|\le 1$, including the positive pair changes the loss by at most $2\log(1+e^2/(B-1))$, which vanishes as $B\to\infty$.
The same limiting result therefore applies to \cref{eq:symmetric-infonce-objective}.
We assume that $f_H$ attains this optimum in our analysis.
\\
\\

\newtheorem*{uniformassumption}{Assumption~\ref{as:uniform}}
\begin{uniformassumption}[Uniformity]
The marginal distribution over the representations learned for $f_H$ under the symmetric InfoNCE objective is uniform on the unit hypersphere $\mathbb{S}^{d-1}$\textup{:}
\begin{equation}
p_\pi^\gamma(\psi)
\triangleq
\int
p_\pi^\gamma(v)\,
\delta\!\left(\psi-\psi(v)\right)
\,\mathrm{d}v
=
\operatorname{Unif}\!\left(\mathbb{S}^{d-1}\right).
\end{equation}
\end{uniformassumption}
Following \citet{wang2020understanding}, we decompose the objective in \cref{eq:symmetric-infonce-objective} into alignment and uniformity terms corresponding to the numerators and denominators, respectively. We justify Assumption~\ref{as:uniform} using this decomposition and its entropy interpretation.
Since $R$ is an orthogonal matrix and the representations $\psi$ have unit norm, we have
\[
f_H(s_i,s_i^+)
=
\psi(s_i)^\top R\psi(s_i^+)
=
1-\frac12\|R^\top\psi(s_i)-\psi(s_i^+)\|_2^2.
\]
The alignment term therefore pulls $R^\top\psi(s_i)$ toward $\psi(s_i^+)$ for each positive pair.
For the uniformity term, we consider the log-denominator of the first term
in \cref{eq:symmetric-infonce-objective}.
If positive pairs are sufficiently aligned, replacing $R^\top\psi(s_i)$ with $\psi(s_i^+)$ in the von Mises--Fisher (vMF) log-density gives the approximation below.
After subtracting the constant $\log B$, this log-denominator can be written as
\begin{align}
\frac{1}{B}\sum_{i=1}^{B}
\log\!\left(
\frac{1}{B}\sum_{j=1}^{B}
\exp\!\left(f_H(s_i,s_j^+)\right)
\right)\!
&=
\frac{1}{B}\sum_{i=1}^{B}
\log\!\left(
\frac{1}{B}\sum_{j=1}^{B}
\frac{
\exp\!\left(\psi(s_i)^\top R\psi(s_j^+)\right)
}{Z_{\mathrm{vMF}}}
\right)
\!\!+\log Z_{\mathrm{vMF}}
\notag\\
&=
\frac{1}{B}\sum_{i=1}^{B}
\log\widehat p_{\mathrm{vMF}}(R^\top\psi(s_i))
+\log Z_{\mathrm{vMF}}
\notag\\
&\approx
\frac{1}{B}\sum_{i=1}^{B}
\log\widehat p_{\mathrm{vMF}}(\psi(s_i^+))
+\log Z_{\mathrm{vMF}}
\notag\\
&=
-\widehat{\mathcal H}(\psi)
+\log Z_{\mathrm{vMF}},
\label{eq:uniformity-entropy-interpretation}
\end{align}
where $\widehat p_{\mathrm{vMF}}(u)=\bigl[\sum_j\exp\!\left(u^\top\psi(s_j^+)\right)\bigr]/(BZ_{\mathrm{vMF}})$ is a vMF kernel density estimate, $Z_{\mathrm{vMF}}$ is its normalizing constant, and $\widehat{\mathcal H}(\psi)=-\bigl[\sum_i\log\widehat p_{\mathrm{vMF}}(\psi(s_i^+))\bigr]/B$ is the corresponding resubstitution entropy estimate.
The entropy interpretation for the denominator of the second term in \cref{eq:symmetric-infonce-objective} can be derived in the same way.

Minimizing the term $-\widehat{\mathcal H}(\psi)$ in \cref{eq:uniformity-entropy-interpretation} maximizes the estimated marginal entropy of $\psi$, encouraging representations to spread across the sphere.
Since maximizing entropy on the unit sphere yields the uniform distribution on $\mathbb{S}^{d-1}$, we assume a uniform marginal distribution for the learned representations $\psi$ in our analysis.

\clearpage
\subsection{Derivation of the representation-space objective}
\label{app:reformulation}

In this section, we derive the objective in \cref{eq:latent_objective}
from the posterior KL objective in \cref{eq:recursive_objective}.
We fix the state-goal pair $(s,g)$, the number of intermediate
waypoints $K\geq 1$, and the data-collection policy $\hat{\pi}$.
Following the setup in \Cref{sec:problem_formulation}, we set $w_0=s$ and $w_{K+1}=g$ and use the factorization $q(w_{1:K}\mid s,g)=\prod_{i=1}^{K}q_i(w_i\mid s,g)$.

Assuming that the waypoint sequence is a Markov chain,
the posterior can be written as
\begin{equation}
\begin{aligned}
p_{\hat{\pi}}^{\gamma,K}(w_{1:K}\mid s,g)
=
\frac{
p_{\hat{\pi}}^{\gamma,K}(w_{1:K},g\mid s)
}{
p_{\hat{\pi}}^{\gamma,K}(g\mid s)
}
=
\frac{
\prod_{i=0}^{K}
p_{\hat{\pi}}^\gamma(w_{i+1}\mid w_{0:i})
}{
p_{\hat{\pi}}^{\gamma,K}(g\mid s)
}
=
\frac{
\prod_{i=0}^{K}
p_{\hat{\pi}}^\gamma(w_{i+1}\mid w_i)
}{
p_{\hat{\pi}}^{\gamma,K}(g\mid s)
}.
\end{aligned}
\nonumber
\end{equation}

Expanding the KL divergence in \cref{eq:recursive_objective} gives
\begin{align}
&-
D_{\mathrm{KL}}
\left(
q(\cdot\mid s,g)
\;\middle\|\;
p_{\hat{\pi}}^{\gamma,K}(\cdot\mid s,g)
\right)
\notag\\
&=
\mathbb{E}_{w_{1:K}\sim q(\cdot\mid s,g)}
\left[
\sum_{i=0}^{K}
\log p_{\hat{\pi}}^\gamma(w_{i+1}\mid w_i)
-
\sum_{i=1}^{K}
\log q_i(w_i\mid s,g)
\right]
-
\log p_{\hat{\pi}}^{\gamma,K}(g\mid s).
\label{app:eq_state_objective_from_kl}
\end{align}

Applying \Cref{as:critic} with $\pi=\hat{\pi}$, we have
\begin{equation}
\log p_{\hat{\pi}}^\gamma(w_{i+1}\mid w_i)
=
f_H(w_i,w_{i+1})
+
\log p_{\hat{\pi}}^\gamma(w_{i+1})
+
\log C.
\label{app:eq_assumption1}
\end{equation}

Substituting \cref{app:eq_assumption1} into \cref{app:eq_state_objective_from_kl} yields
\begin{align}
&-
D_{\mathrm{KL}}
\left(
q(\cdot\mid s,g)
\;\middle\|\;
p_{\hat{\pi}}^{\gamma,K}(\cdot\mid s,g)
\right)
\notag\\
&=
\mathbb{E}_{w_{1:K}\sim q(\cdot\mid s,g)}
\left[
\sum_{i=0}^{K}f_H(w_i,w_{i+1})
+
\sum_{i=1}^{K}
\log\frac{
p_{\hat{\pi}}^\gamma(w_i)
}{
q_i(w_i\mid s,g)
}
\right]
+
\mathrm{const.}
\notag\\
&=
\mathbb{E}_{w_{1:K}\sim q(\cdot\mid s,g)}
\left[
\sum_{i=0}^{K}f_H(w_i,w_{i+1})
\right]
-
\sum_{i=1}^{K}
D_{\mathrm{KL}}
\left(
q_i(\cdot\mid s,g)
\;\middle\|\;
p_{\hat{\pi}}^\gamma(\cdot)
\right)
+
\mathrm{const.}
\label{app:state_objective}
\end{align}
Here, the constant is
$\log p_{\hat{\pi}}^\gamma(g)+(K+1)\log C
-\log p_{\hat{\pi}}^{\gamma,K}(g\mid s)$.
It is independent of $q$ since $(s,g)$ and $K$ are fixed.
Removing it gives the objective in \cref{eq:waypoint-critic-objective}.

We now express the objective in terms of $z_i=\psi(w_i)$. Following the notation in \Cref{sec:closed_form_waypoint_inference}, let $q_i(z_i\mid s,g)$ denote the representation density induced by $q_i(w_i\mid s,g)$ through $\psi$, and let $p_{\hat{\pi}}^\gamma(z_i)$ denote the representation marginal density induced by $p_{\hat{\pi}}^\gamma(w_i)$ through the same mapping.
Also, the joint distribution remains factorized as
$q(z_{1:K}\mid s,g)=\prod_{i=1}^{K}q_i(z_i\mid s,g)$.

By the chain rule, the KL divergence in \Cref{app:state_objective} decomposes into a representation term and an expected conditional term over states:
\begin{align}
&D_{\mathrm{KL}}
\left(
q_i(w_i\mid s,g)
\;\middle\|\;
p_{\hat{\pi}}^\gamma(w_i)
\right)
\notag\\
&=
D_{\mathrm{KL}}
\left(
q_i(z_i\mid s,g)
\;\middle\|\;
p_{\hat{\pi}}^\gamma(z_i)
\right)
+
\mathbb{E}_{z_i\sim q_i(\cdot\mid s,g)}
\left[
D_{\mathrm{KL}}
\left(
q_i(w_i\mid z_i,s,g)
\;\middle\|\;
p_{\hat{\pi}}^\gamma(w_i\mid z_i)
\right)
\right],
\notag\\
&\ge
D_{\mathrm{KL}}
\left(
q_i(z_i\mid s,g)
\;\middle\|\;
p_{\hat{\pi}}^\gamma(z_i)
\right).
\label{app:KL_chain}
\end{align}
For any $q_i(z_i \mid s,g)$ such that $D_{\mathrm{KL}}(q_i(z_i\mid s,g)\|p^\gamma_{\hat\pi}(z_i))<\infty$, the equality in \Cref{app:KL_chain} is always achievable with $q_i(w_i\mid z_i,s,g)=p^\gamma_{\hat\pi}(w_i\mid z_i)$.
Thus, with this choice, minimizing the KL divergence over representations can lead to minimizing the KL divergence in \Cref{app:state_objective}.

Under \Cref{as:uniform}, $p_{\hat{\pi}}^\gamma(\psi)$ is uniform
on $\mathbb{S}^{d-1}$, so $\log p_{\hat{\pi}}^\gamma(z_i)$
is constant. Therefore, the KL divergence simplifies as follows:
\begin{align}
-
D_{\mathrm{KL}}
\left(
q_i(z_i\mid s,g)
\;\middle\|\;
p_{\hat{\pi}}^\gamma(z_i)
\right)
&=
-\mathbb{E}_{z_i\sim q_i(\cdot\mid s,g)}
\left[
\log q_i(z_i\mid s,g)
\right]
+
\mathrm{const.}
\notag\\
&=
\mathcal{H}(q_i(\cdot\mid s,g))
+
\mathrm{const.},
\label{app:KL_to_ent}
\end{align}
where $\mathcal{H}$ denotes entropy with respect to the surface
measure on $\mathbb{S}^{d-1}$.
Substituting $f_H(w_i,w_{i+1})=z_i^\top Rz_{i+1}$ and
\cref{app:KL_to_ent} into \cref{app:state_objective}
and dropping constants gives the objective in
\cref{eq:latent_objective}:
\begin{equation}
\mathcal{J}(q)
=
\mathbb{E}_{z_{1:K}\sim q(\cdot\mid s,g)}
\Bigg[
\psi(s)^\top Rz_1
+
\sum_{i=2}^{K} z_{i-1}^\top Rz_i
+
z_K^\top R\psi(g)
\Bigg]
+
\sum_{i=1}^{K}
\mathcal{H}(q_i(\cdot\mid s,g)).
\nonumber
\end{equation}

\clearpage
\subsection{Derivation of the closed-form solution}
\label{app:slerp}

In this section, we derive the approximate solution in
\cref{eq:closed_form_high_policy} for a fixed state-goal pair $(s,g)$
and $K\geq 1$ intermediate waypoints.
For simplicity, we suppress the dependence on $(s,g)$ and write
$q(z_{1:K})=\prod_{i=1}^{K}q_i(z_i)$,
with $z_0=\psi(s)$ and $z_{K+1}=\psi(g)$.
To control the stochasticity of the waypoint representations,
we introduce an entropy weight $\alpha>0$ into \cref{eq:latent_objective}:
\begin{equation}
\mathcal{J}(q)
=
\mathbb{E}_{z_{1:K}\sim q}
\left[\sum_{i=0}^{K}z_i^\top Rz_{i+1}\right]
+
\alpha\sum_{i=1}^{K}\mathcal{H}(q_i).
\label{eq:app_slerp_objective}
\end{equation}
As in the main text, each waypoint representation is obtained by
normalizing a Gaussian sample:
\[
\tilde z_i\sim\mathcal{N}(\mu_i,\sigma_i^2 I),
\quad
z_i=\frac{\tilde z_i}{\|\tilde z_i\|_2},
\quad
\|\mu_i\|_2=1.
\]

We use a small-variance approximation to obtain both mean directions $\mu_i$
and variances $\sigma_i^2$ in closed form.
We first determine the mean directions, then use them to optimize the variances.

\paragraph{Mean directions of waypoint representations.}
With isotropic Gaussian covariance, the entropy $\mathcal{H}(q_i)$ in
\cref{eq:app_slerp_objective} depends only on the variances
and is therefore constant when optimizing the mean directions.
Under $q(z_{1:K})=\prod_{i=1}^{K}q_i(z_i)$, each waypoint
representation $z_i$ is independent of the others, allowing us
to express the objective in terms of their individual expectations:
\begin{equation}
\mathcal{J}(q)
=
\sum_{i=0}^{K}
\mathbb{E}[z_i]^\top R\mathbb{E}[z_{i+1}]
+
\mathrm{const.},
\label{eq:app_latent_mean_objective}
\end{equation}
where the constant is independent of $\mu_1,\ldots,\mu_K$.

Since $z_i$ is normalized to unit length, we approximate small
deviations from $\mu_i$ by the perpendicular component of the
Gaussian perturbation, denoted by $\xi_i$.
Its $d-1$ orthogonal coordinates each have variance $\sigma_i^2$, so
\[
\mathbb{E}\!\left[\|\xi_i\|_2^2\right]
=
(d-1)\sigma_i^2.
\]
For a small perturbation, $\|\xi_i\|_2$ approximates the angular deviation from $\mu_i$.
Thus, the small-variance assumption $(d-1)\sigma_i^2\ll 1$ ensures that the representation remains concentrated around its mean direction $\mu_i$.

In the tangent-plane approximation, $\xi_i\perp\mu_i$ and
$\|\mu_i\|_2=1$ imply
$\|\mu_i+\xi_i\|_2=\sqrt{1+\|\xi_i\|_2^2}$.
We can therefore write
\[
\mathbb{E}[z_i]
\approx
\mathbb{E}\!\left[
\frac{\mu_i}{\sqrt{1+\|\xi_i\|_2^2}}
\right]
+
\mathbb{E}\!\left[
\frac{\xi_i}{\sqrt{1+\|\xi_i\|_2^2}}
\right].
\]
The second term vanishes by symmetry since $\xi_i$ and $-\xi_i$
have the same distribution and the denominator is unchanged
under this sign reversal.
Using $(1+x)^{-1/2}\approx 1-x/2$, we obtain
\[
\mathbb{E}[z_i]
\approx
\left(
1-\frac{1}{2}\mathbb{E}\!\left[\|\xi_i\|_2^2\right]
\right)\mu_i
=
\left(
1-\frac{d-1}{2}\sigma_i^2
\right)\mu_i.
\]
Under the small-variance assumption, we use
$\mathbb{E}[z_i]\approx\mu_i$ to optimize the mean directions
independently of the variances.
Setting $v_i=R^i\mu_i$ for $i=1,\ldots,K$, maximizing
\Cref{eq:app_latent_mean_objective} under this approximation
reduces to
\[
\max_{v_1,\ldots,v_K\in\mathbb{S}^{d-1}}
\sum_{i=0}^{K}v_i^\top v_{i+1},
\]
where $v_0=\psi(s)$ and $v_{K+1}=R^{K+1}\psi(g)$.
Let $\theta\in(0,\pi)$ denote the angle between $v_0$ and $v_{K+1}$.
The sum is maximized by placing the directions at equal angular
intervals $\Delta=\theta/(K+1)$ along the shortest geodesic
connecting them, yielding the spherical linear interpolation
\[
v_i^*
=
\operatorname{slerp}\!\left(
v_0,v_{K+1},\frac{i}{K+1}
\right).
\]
Using $\mu_i=(R^\top)^i v_i$, we obtain
\begin{equation}
\mu_i^*(s,g)
=
(R^\top)^i
\operatorname{slerp}\!\left(
\psi(s),R^{K+1}\psi(g),\frac{i}{K+1}
\right).
\label{eq:app_slerp_mean}
\end{equation}
The case $\theta=0$ follows by continuity, and we exclude $\theta=\pi$, where the shortest arc is not unique.

\paragraph{Variances of waypoint representations.}
We now optimize the variances in \cref{eq:app_slerp_objective},
with the mean directions fixed at $\mu_i^*$ from
\cref{eq:app_slerp_mean}.
For each $i$, we rewrite the objective to isolate its dependence on $\sigma_i$:
\begin{equation}
\mathcal{J}(q)
=
\left(
R^\top\mathbb{E}[z_{i-1}]
+
R\,\mathbb{E}[z_{i+1}]
\right)^\top
\mathbb{E}[z_i]
+
\alpha\mathcal{H}(q_i)
+
\mathrm{const.},
\label{eq:app_latent_variance_objective}
\end{equation}
where the constant is independent of $\sigma_i$.
To optimize the variances, we retain the reduction in mean
length derived above.
A larger variance allows sampled waypoint sequences to cover
a broader region of representation space, increasing entropy
while reducing the expected inner products between neighboring
representations.
Using only $\mathbb{E}[z_i]\approx\mu_i^*$ makes the first
term independent of the variances and fails to capture this tradeoff.

Set $\mu_0^*=\psi(s)$, $\mu_{K+1}^*=\psi(g)$, and
$\sigma_0^2=\sigma_{K+1}^2=0$ for the fixed endpoints.
Substituting
$\mathbb{E}[z_j]\approx
(1-\frac{d-1}{2}\sigma_j^2)\mu_j^*$
for $j=i,i+1$ into each neighboring inner product gives
\[
\mathbb{E}[z_i]^\top R\,\mathbb{E}[z_{i+1}]
\approx
\left[
1-\frac{d-1}{2}
\left(\sigma_i^2+\sigma_{i+1}^2\right)
\right]
(\mu_i^*)^\top R\mu_{i+1}^*,
\]
where we omit the products $\sigma_i^2\sigma_{i+1}^2$
under the small-variance assumption.
Applying this expansion to the two pairs $(i-1,i)$ and
$(i,i+1)$ in \cref{eq:app_latent_variance_objective},
the terms proportional to $\sigma_i^2$ are
\[
-\frac{d-1}{2}\sigma_i^2
\left[
(\mu_{i-1}^*)^\top R\mu_i^*
+
(\mu_i^*)^\top R\mu_{i+1}^*
\right]
=
-(d-1)\sigma_i^2\cos\Delta.
\]
Recall that consecutive transformed directions $v_j^*=R^j\mu_j^*$
are separated by the angle $\Delta=\theta/(K+1)$.
Thus, each inner product satisfies
\[
(\mu_j^*)^\top R\mu_{j+1}^*
=
(v_j^*)^\top v_{j+1}^*
=
\cos\Delta,
\qquad j=i-1,i.
\]
Collecting the remaining terms independent of $\sigma_i$
into a constant, we can therefore write
\begin{equation}
\mathcal{J}(q)
\approx
-(d-1)\sigma_i^2\cos\Delta
+
\alpha\mathcal{H}(q_i)
+
\mathrm{const.}
\label{eq:app_variance_reduced_objective}
\end{equation}

Under the same small-variance assumption, we approximate
$\mathcal{H}(q_i)$ by the entropy of an isotropic Gaussian
in the $(d-1)$-dimensional tangent plane.
Each of its $d-1$ independent coordinates follows
$X\sim\mathcal{N}(0,\sigma_i^2)$, with density
\[
p_X(x)
=
\frac{1}{\sqrt{2\pi\sigma_i^2}}
\exp\!\left(-\frac{x^2}{2\sigma_i^2}\right).
\]
Substituting this density into $-\mathbb{E}[\log p_X(X)]$
gives
\[
-\mathbb{E}[\log p_X(X)]
=
\frac{1}{2}\log(2\pi\sigma_i^2)
+
\frac{\mathbb{E}[X^2]}{2\sigma_i^2}
=
\frac{1}{2}\log(2\pi e\,\sigma_i^2),
\]
where $\mathbb{E}[X^2]=\sigma_i^2$.
Since these coordinates are independent, their entropies add,
giving the following expression under the tangent-plane approximation:
\[
\mathcal{H}(q_i)
\approx
\frac{d-1}{2}\log(2\pi e\,\sigma_i^2)
=
\frac{d-1}{2}\bigl(1+\log(2\pi)\bigr)
+
(d-1)\log\sigma_i.
\]
Substituting the entropy expression above into
\cref{eq:app_variance_reduced_objective},
and collecting terms independent of $\sigma_i$ into a constant,
we obtain
\begin{equation}
\mathcal{J}(q)
\approx
-(d-1)\sigma_i^2\cos\Delta
+
\alpha(d-1)\log\sigma_i
+
\mathrm{const.}
\label{eq:app_approx_objective_latent}
\end{equation}
Differentiating the right-hand side with respect to $\sigma_i$
and setting the derivative to zero gives
\[
-2(d-1)\cos\Delta\,\sigma_i
+
\frac{\alpha(d-1)}{\sigma_i}
=
0.
\]
Using $\Delta=\theta/(K+1)$, we obtain
\begin{equation}
\sigma_i^{2*}(s,g)
=
\frac{\alpha}{2\cos\Delta}
=
\frac{\alpha}{
2\cos\!\left(\theta/(K+1)\right)
},
\qquad i=1,\ldots,K.
\label{eq:app_slerp_variance}
\end{equation}
For $d\geq 2$ and $\alpha>0$, the approximate objective in
\cref{eq:app_approx_objective_latent} is strictly concave in
$\sigma_i>0$ since $\cos\Delta>0$, so the positive stationary
point is its unique maximizer.
Together, \Cref{eq:app_slerp_mean,eq:app_slerp_variance} yield
the approximate solution in \cref{eq:closed_form_high_policy}
under the small-variance condition
$\alpha(d-1)/(2\cos\Delta)\ll 1$.

\clearpage
\subsection{Derivation of the distribution over the number of waypoints}
\label{app:k-selection}

In this section, we derive a distribution over the number of intermediate waypoints for a given state-goal pair.
Following the setup in \Cref{sec:problem_formulation}, we use the notation
\[
w_{1:K}=(w_1,\ldots,w_K),
\quad
w_0=s,
\quad
w_{K+1}=g.
\]
The $K$ intermediate waypoints divide the trajectory into $K+1$ segments, where the temporal intervals are sampled independently from a geometric distribution.
We define $p_{\hat{\pi}}^{\gamma,K}(g\mid s)$ as the resulting distribution of $g$ under $\hat{\pi}$ given the starting state $s$.
For $K=0$, this distribution reduces to $p_{\hat{\pi}}^\gamma(g\mid s)$, with a time interval sampled from a single geometric distribution and no intermediate waypoints.

Recall that $f_H$ is parameterized as
\begin{equation}
f_H(s,s^+)
=
\psi(s)^\top R\psi(s^+)
=
\left(R^\top\psi(s)\right)^\top\psi(s^+),
\label{eq:high_level_critic_rotation}
\nonumber
\end{equation}
where $\psi(s)\in\mathbb S^{d-1}$ and $R$ is orthogonal. Next, we relate $R^\top\psi(s)$ to the expected representation of a future state sampled from $p_{\hat{\pi}}^\gamma(\cdot\mid s)$.

\begin{lemma}[Expected positive alignment]
\label{lem:high_level_positive_alignment}
Assume that, at convergence, the contrastive representations are uniformly distributed on $\mathbb S^{d-1}$, with $d\gg1$.
The expected representation of a future state after a time interval sampled from a geometric distribution satisfies
\begin{equation}
\mathbb E_{s^+\sim p_{\hat{\pi}}^\gamma(\cdot\mid s)}
\left[\psi(s^+)\right]
\approx
\rho R^\top\psi(s),
\nonumber
\label{eq:high_level_positive_alignment}
\end{equation}
where $\rho\in(0,1]$ is a coefficient shared across states.
\end{lemma}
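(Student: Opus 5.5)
The plan is to compute the conditional expectation directly from the Bayes-optimal form in \Cref{as:critic} and the uniform marginal in \Cref{as:uniform}. By \Cref{as:critic} with $\pi=\hat\pi$, we have $p_{\hat\pi}^\gamma(s^+\mid s)=C\,p_{\hat\pi}^\gamma(s^+)\exp\!\bigl(\psi(s)^\top R\psi(s^+)\bigr)$. Pushing this forward through $\psi$ and using \Cref{as:uniform}, the conditional density of $z=\psi(s^+)$ given $s$ is proportional, with respect to the surface measure on $\mathbb S^{d-1}$, to $\exp\!\bigl((R^\top\psi(s))^\top z\bigr)$. Normalization forces the constant to equal $C_d(1)$, the vMF normalizer. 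It is independent of $s$, which is consistent with $C$ being a global constant in \Cref{as:critic}. So the first step is to identify $\psi(s^+)\mid s\sim\mathrm{vMF}(\mu=R^\top\psi(s),\kappa)$, with $\kappa=1$ in the unscaled parameterization, or more generally a fixed temperature $\kappa$ shared across states.

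The second step is to invoke the standard vMF mean formula $\mathbb E[z]=A_d(\kappa)\,\mu$, where $A_d(\kappa)=I_{d/2}(\kappa)/I_{d/2-1}(\kappa)$. I would sketch its derivation rather than cite it: $\mathbb E[z]=\nabla_{\kappa\mu}\log Z_d(\kappa)$, and rotational symmetry about $\mu$ kills the components orthogonal to $\mu$. This gives $\mathbb E[\psi(s^+)]=\rho R^\top\psi(s)$ with $\rho=A_d(\kappa)$. Since $\kappa$ is shared across $s$, so is $\rho$. The bound $\rho\in(0,1]$ follows because $A_d(\kappa)>0$ for $\kappa>0$, and $\|\mathbb E z\|\le 1$ by Jensen.

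The main obstacle is that the identity is only approximate. \Cref{as:uniform} is itself an idealization, and the pushforward of $p_{\hat\pi}^\gamma(s^+)$ onto the sphere may carry extra structure when $\psi$ is not injective; the conditional density in $z$ then involves averaging $\exp(\cdot)$ over fibres, which is harmless since $f_H$ depends on $s^+$ only through $\psi(s^+)$. I would state the $\approx$ as absorbing this and the finite-batch deviation from Bayes optimality discussed in Appendix~\ref{app:assumption}.

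The role of $d\gg1$ is to justify that the result is insensitive to the exact concentration. I would use the asymptotic $A_d(\kappa)\approx\kappa/d$ for $\kappa\ll d$, or the general expansion $A_d(\kappa)\approx\kappa/\bigl(\tfrac{d}{2}+\sqrt{\kappa^2+d^2/4}\bigr)$, which keeps $\rho$ strictly positive. I would also note that in high dimension the vMF concentrates on the ring $\{z:\mu^\top z\approx\rho\}$, so the mean direction is the meaningful quantity. I would close by remarking that any learned temperature only rescales $\rho$ without changing the direction $R^\top\psi(s)$, which is all that the $K$-selection score $\ell_K$ relies on.
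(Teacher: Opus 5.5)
Your argument is correct. It differs from the paper mainly in being self-contained. The paper gives no derivation; it imports Lemma~2 of Bastankhah et al.\ (stated for the CRL state-action critic $\phi(s,a)^\top\psi(s^+)$) and asserts that it carries over under the substitution $\phi(s,a)\mapsto R^\top\psi(s)$.

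You instead derive the claim directly from \Cref{as:critic} (with $\pi=\hat\pi$) and \Cref{as:uniform}. Pushing $p_{\hat\pi}^\gamma(\cdot\mid s)$ through $\psi$ gives the uniform measure tilted by $\exp\bigl((R^\top\psi(s))^\top z\bigr)$. This is a vMF law whose mean direction $R^\top\psi(s)$ is a unit vector because $R$ is orthogonal. The vMF mean formula then finishes the proof with the explicit value $\rho=A_d(\kappa)=I_{d/2}(\kappa)/I_{d/2-1}(\kappa)$.

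Your route buys three things:
\begin{itemize}
\item The identity is exact under the two assumptions for every $d\ge 2$. So $d\gg1$ only controls the size $\rho\approx\kappa/d$, not the validity of the statement.
\item It makes explicit why $\rho$ is shared across states: it depends only on $d$ and the fixed temperature.
\item It actually checks that the substitution is legitimate. A single constant $C$ is compatible with normalization for every $s$ precisely because the uniform marginal is rotation invariant. Non-injectivity of $\psi$ is harmless because $f_H$ depends on $s^+$ only through $\psi(s^+)$.
\end{itemize}
This check is not vacuous. A normalizer depending on the future state, as in \cref{eq:bayes-optimal-crl-critic}, would insert an extra $z$-dependent factor and break the immediate vMF form. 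The citation buys brevity and continuity with prior analysis, but leaves these checks implicit.

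One small caveat on your closing remarks concerns the paper's settings, $\kappa=1/\tau_{\mathrm{NCE}}=10$ and $d\in\{128,512\}$. There the fluctuations of $\mu^\top z$ are of order $1/\sqrt d$, comparable to or larger than $\rho\approx\kappa/d$. So the claimed concentration on the ring $\mu^\top z\approx\rho$ is loose and says nothing the uniform distribution does not already say. It is not needed for the proof and can be dropped.
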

\begin{proof}
The result follows directly from the proof of Lemma~2 in Appendix~B.2.2 of \citet{bastankhah2026demystifying}, with $\phi(s,a)$ replaced by $R^\top\psi(s)$.
\end{proof}
\Cref{lem:high_level_positive_alignment} identifies $R^\top\psi(s)$ as the predicted direction after one geometric transition, while $\rho$ captures the contraction in the expected representation.
The following result extends this relation to $K+1$ geometric transitions.
\\

\begin{lemma}[Recursive composition of geometric transitions]
\label{lem:recursive-composition}
Let $s_0=s$ and recursively sample $s_i\sim p_{\hat{\pi}}^\gamma(\cdot\mid s_{i-1})$ for $i=1,\ldots,K+1$. Then the expected representation of $s_{K+1}$ satisfies
\begin{equation}
\mathbb E_{s_{K+1}\sim p_{\hat{\pi}}^{\gamma,K}(\cdot\mid s)}
\left[\psi(s_{K+1})\right]
\approx
\rho^{K+1}(R^\top)^{K+1}\psi(s),
\label{eq:negative_binomial_representation}
\nonumber
\end{equation}
where $\rho$ is the coefficient in
\Cref{lem:high_level_positive_alignment}.
\end{lemma}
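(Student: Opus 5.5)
The plan is to argue by induction on the number of geometric transitions, using \Cref{lem:high_level_positive_alignment} at each step and the Markov structure of the sampling chain $s_0\to s_1\to\cdots\to s_{K+1}$. The base case $K=0$ is exactly \Cref{lem:high_level_positive_alignment}: a single draw $s_1\sim p_{\hat{\pi}}^\gamma(\cdot\mid s)$ gives $\mathbb E[\psi(s_1)]\approx\rho R^\top\psi(s)$.

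For the inductive step, suppose $\mathbb E[\psi(s_k)\mid s_0=s]\approx\rho^{k}(R^\top)^{k}\psi(s)$. Condition on $s_k$ and use the tower property together with the Markov property of the recursive sampling:
\begin{equation}
\mathbb E\left[\psi(s_{k+1})\mid s_0=s\right]
=
\mathbb E_{s_k}\!\left[\,\mathbb E_{s_{k+1}\sim p_{\hat{\pi}}^\gamma(\cdot\mid s_k)}\left[\psi(s_{k+1})\right]\right]
\approx
\mathbb E_{s_k}\!\left[\rho R^\top\psi(s_k)\right]
=
\rho R^\top\,\mathbb E_{s_k}\!\left[\psi(s_k)\right].
\nonumber
\end{equation}
The inner expectation is handled by \Cref{lem:high_level_positive_alignment}, applied at the intermediate state $s_k$. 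This is legitimate because $\rho$ is shared across all states, so it factors out of the outer expectation. The final equality uses only linearity of expectation and the fact that $R^\top$ is a fixed matrix. Substituting the inductive hypothesis then gives $\rho^{k+1}(R^\top)^{k+1}\psi(s)$. After $K+1$ steps, the law of $s_{K+1}$ is by definition $p_{\hat{\pi}}^{\gamma,K}(\cdot\mid s)$, the sum of $K+1$ independent geometric intervals, and this yields the claim.

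The main obstacle is justifying that the approximation error does not compound badly. Each application of \Cref{lem:high_level_positive_alignment} holds only approximately, in the regime $d\gg1$ with uniform representations. In the inductive step this error is averaged over $s_k$ and then multiplied by $\rho R^\top$. Since $R^\top$ is orthogonal and $\rho\le 1$, this map is norm non-increasing. So if the one-step error is at most $\varepsilon$ uniformly in the conditioning state, the $(K+1)$-step error is at most $\sum_{j=0}^{K}\rho^{j}\varepsilon\le(K+1)\varepsilon$. I would state this bound explicitly to make the meaning of ``$\approx$'' precise.

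A second point to check is that \Cref{lem:high_level_positive_alignment} may be applied at the random intermediate states $s_k$, and not only at a fixed starting state. This holds because the lemma is stated pointwise for every $s$, with a state-independent $\rho$. The uniformity assumption on the marginal is what makes $\rho$ state-independent.
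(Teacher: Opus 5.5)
Your proof is correct and is essentially the paper's argument: the base case comes from \Cref{lem:high_level_positive_alignment}, and the inductive step uses the law of iterated expectations under the Markov assumption, pulling the state-independent factor $\rho R^\top$ outside the expectation by linearity. Your explicit error bound $\sum_{j=0}^{K}\rho^{j}\varepsilon$, obtained from the norm-non-increasing map $\rho R^\top$, is a valid refinement that the paper leaves implicit in its use of $\approx$; since the bound is additive, though, it is only meaningful when it is small compared with the target's norm $\rho^{K+1}$.
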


\begin{proof}
By definition, $s_{K+1}\sim p_{\hat{\pi}}^{\gamma,K}(\cdot\mid s)$.
For the expected representation, the case $K=0$ follows directly from
\Cref{lem:high_level_positive_alignment}.
Suppose that
$\mathbb E[\psi(s_K)\mid s_0=s]
\approx\rho^K(R^\top)^K\psi(s)$.
Assuming that the process is Markov with transition kernel $p_{\hat{\pi}}^\gamma$,
the law of iterated expectations gives
\begin{align}
\mathbb E[\psi(s_{K+1})\mid s_0=s]
&=
\mathbb E\left[
    \mathbb E[\psi(s_{K+1})\mid s_K]
    \,\middle|\,
    s_0=s
\right]
\nonumber\\
&\approx
\mathbb E\left[
    \rho R^\top\psi(s_K)
    \,\middle|\,
    s_0=s
\right]
\nonumber\\
&=
\rho R^\top
\mathbb E[\psi(s_K)\mid s_0=s]
\nonumber\\
&\approx
\rho R^\top
\rho^K(R^\top)^K\psi(s)
\nonumber\\
&=
\rho^{K+1}(R^\top)^{K+1}\psi(s).
\nonumber
\end{align}
The stated approximation follows by induction.
\end{proof}

By \Cref{lem:recursive-composition}, the expected representation of
$s_{K+1}$ is approximated by $\rho^{K+1}(R^\top)^{K+1}\psi(s)$.
Since $\psi(s)$ has unit norm and $R$ is orthogonal, this vector has
norm $\rho^{K+1}$.
Normalizing it therefore gives the predicted direction
\[
\mu_K(s)\triangleq(R^\top)^{K+1}\psi(s).
\]
We define its alignment with the goal representation as
\begin{align}
\ell_K(s,g)
&\triangleq
\mu_K(s)^\top\psi(g)
\notag\\
&=
\left((R^\top)^{K+1}\psi(s)\right)^\top\psi(g)
\notag\\
&=
\psi(s)^\top R^{K+1}\psi(g).
\label{eq:depth_compatibility}
\nonumber
\end{align}
This recovers the score in \cref{eq:k_selection_score}, which measures
how well the predicted direction for $K$ intermediate waypoints aligns
with the goal representation.

We now convert this alignment score into a distribution over $K$.
Since $\mu_K(s)\in\mathbb S^{d-1}$, we model the goal representation
with a von Mises--Fisher likelihood centered at $\mu_K(s)$:
\begin{align}
p(\psi(g)\mid\psi(s),K)
&\triangleq
C_d(\kappa)\exp\!\left(
\kappa\mu_K(s)^\top\psi(g)
\right)
\notag\\
&=
C_d(\kappa)\exp\!\left(
\kappa\ell_K(s,g)
\right),
\nonumber
\end{align}
as in \cref{eq:k_selection_likelihood}.
Here, $\kappa>0$ is a fixed concentration parameter.
The normalizing constant $C_d(\kappa)$ depends only on $d$ and $\kappa$
and is therefore independent of both $s$ and $K$.

For a prior $p(K)$ over $\{0,\ldots,K_{\max}\}$, Bayes' rule gives
\begin{align}
p(K\mid\psi(s),\psi(g))
&=
\frac{
p(\psi(g)\mid\psi(s),K)p(K)
}{
\sum_{J=0}^{K_{\max}}
p(\psi(g)\mid\psi(s),J)p(J)
}
\notag\\
&=
\frac{
\exp\!\left(\kappa\ell_K(s,g)\right)p(K)
}{
\sum_{J=0}^{K_{\max}}
\exp\!\left(\kappa\ell_J(s,g)\right)p(J)
},
\nonumber
\end{align}
where the shared normalizing constant $C_d(\kappa)$ cancels.

With the uniform prior $p(K)=1/(K_{\max}+1)$, this reduces to
\begin{equation}
p(K\mid\psi(s),\psi(g))
=
\operatorname{Softmax}_{K\in\{0,\ldots,K_{\max}\}}
\left(
\frac{\ell_K(s,g)}{\eta}
\right),
\label{eq:uniform_recursion_depth_posterior}
\nonumber
\end{equation}
where $\eta=1/\kappa$, yielding \cref{eq:k_selection_distribution}.

\clearpage
\section{Additional Experimental Results}
\label{app:add_exp}
\subsection{Ablation Study}
\begin{figure}[H]
    \centering
    \includegraphics[width=1.0\linewidth]{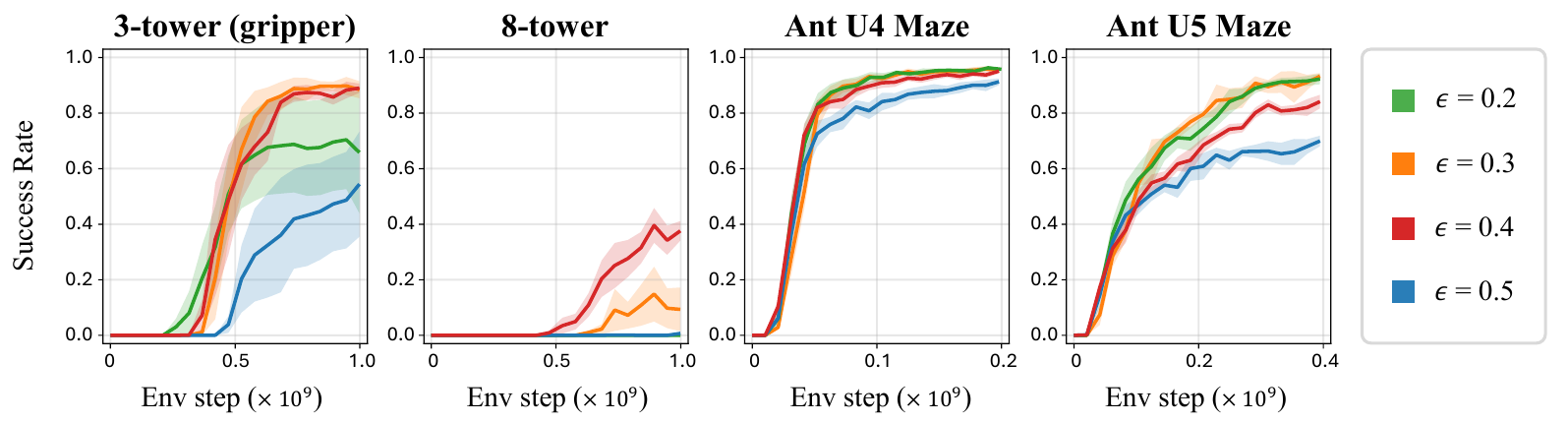}
    \vspace{-10pt}
    \caption{\textbf{Ablation on the threshold $\epsilon$.} }
    \vspace{-25pt}
    \label{fig:epsilon_ablation}
\end{figure}
\paragraph{Ablation on the threshold $\epsilon$.}

We investigate the effect of the threshold $\epsilon$ used to select the waypoint index~$i^*$.
As described in \Cref{eq:waypoint_index_selection}, a larger $\epsilon$ favors earlier waypoints, leading to more conservative subgoal selection.
Figure~\ref{fig:epsilon_ablation} shows that C-Slerp performs well across multiple threshold choices, although sensitivity varies across tasks.
Performance is relatively robust across $\epsilon \in \{0.2, 0.3, 0.4\}$ in all evaluated environments except the 8-tower task.
The 8-tower task is more sensitive to the threshold, with $\epsilon=0.4$ achieving the highest success rate, whereas $\epsilon=0.2$ and $0.3$ perform better on Ant U5 Maze.
We use $\epsilon=0.4$ as the default configuration for our algorithm across all tasks.
\\
\\
\begin{figure}[H]
    \centering
    \includegraphics[width=0.65\linewidth]{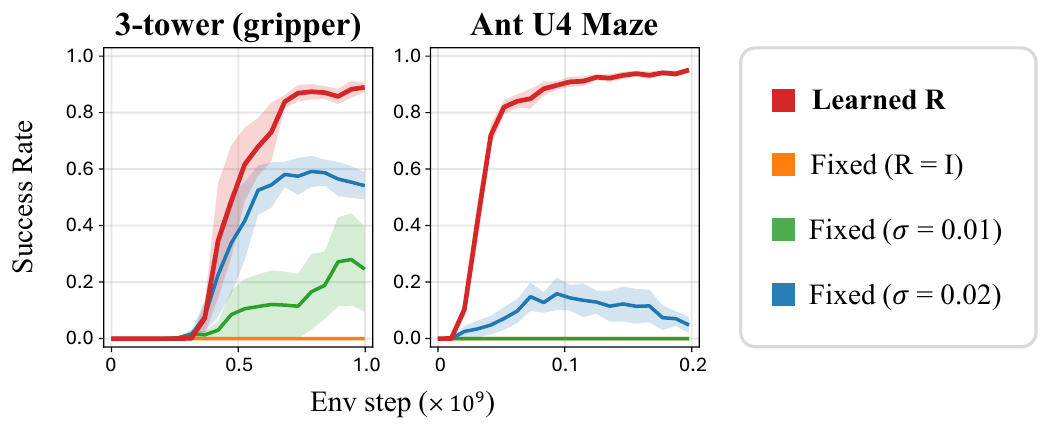}
    \vspace{-0pt}
    \caption{\textbf{Ablation on learning the rotation matrix $R$.}}
    \vspace{-25pt}
    \label{fig:rotation_ablation}
\end{figure}
\paragraph{Ablation on learning the rotation matrix $R$.}
We investigate the role of learning the rotation matrix $R$ in $f_H(s,s^+)=\psi(s)^\top R\psi(s^+)$ by comparing it with fixed alternatives.
As described in \Cref{sec:planning_representations}, we parameterize $R$ through the Cayley transform, where $\Omega=A-A^\top$ and $R=(I+\Omega)^{-1}(I-\Omega)$.
Using this parameterization, we compare our default approach of learning $A$ with three fixed variants.
The first sets $A=0$, yielding the identity matrix $R=I$.
The other two independently sample $A_{ij}\sim\mathcal{N}(0,\sigma^2)$ with $\sigma\in\{0.01,0.02\}$.
In all three variants, we stop gradients to $A$, keeping $R$ fixed throughout training.
As shown in Figure~\ref{fig:rotation_ablation}, setting $R=I$ makes $f_H$ \textit{symmetric} and prevents successful learning in the evaluated tasks.
In contrast, the two variants with randomly initialized, fixed $R$ introduce asymmetry and enable learning, but yield unstable performance.
These results suggest that asymmetry alone is insufficient for stable performance, highlighting the benefit of \textit{learning} $R$ to capture temporal relationships between states.
\\
\\
\\

\begin{figure}[H]
    \centering
    \includegraphics[width=0.7\linewidth]{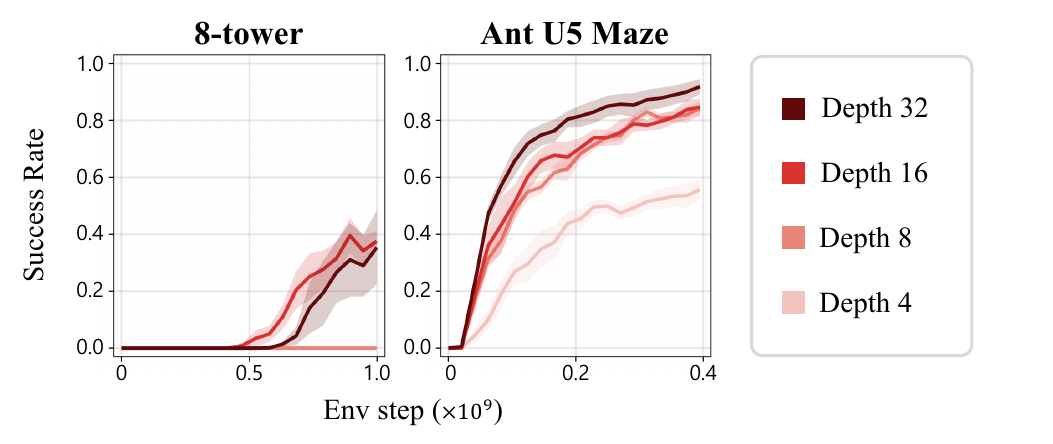}
    \vspace{-5pt}
    \caption{\textbf{Ablation on network depth.}}
    \vspace{-5pt}
    \label{fig:depth_comparison}
\end{figure}
\paragraph{Ablation on network depth.}
\citet{wang20261000} report that increasing network depth improves the goal-reaching performance of CRL.
Since C-Slerp also uses contrastive representations, we examine whether it benefits similarly from increased network depth.
The results show that deeper networks generally achieve higher success rates.
On Ant U5 Maze, performance improves as depth increases, while on the 8-tower task, depths of 16 and 32 perform comparably and outperform shallower networks, suggesting that sufficient network depth is needed for this task.
\\
\\
\begin{figure}[H]
    \centering
    \includegraphics[width=0.7\linewidth]{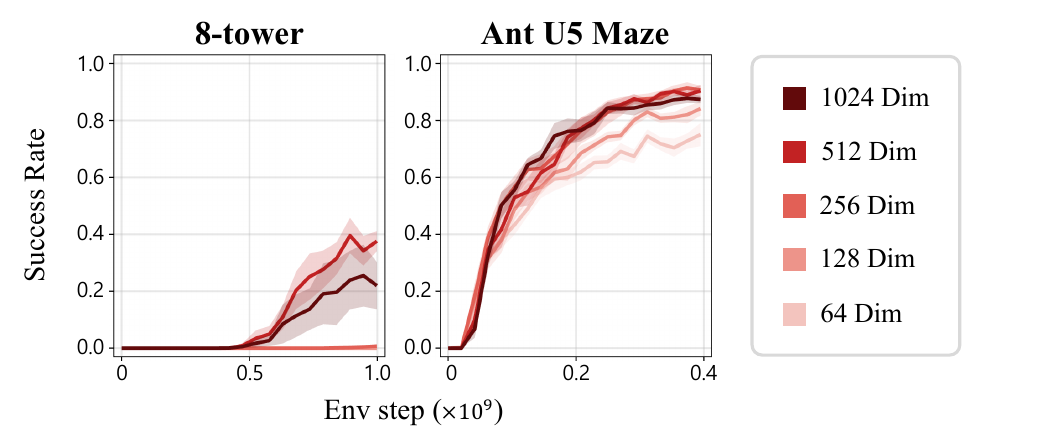}
    \vspace{-5pt}
    \caption{\textbf{Ablation on representation dimension $d$.}}
    \vspace{-5pt}
    \label{fig:repr_comparison}
\end{figure}
\paragraph{Ablation on representation dimension.}
We investigate the effect of the representation dimension $d$.
Since the learned rotation matrix $R$ in $f_H(s,s^+)=\psi(s)^\top R\psi(s^+)$ is a \textit{linear} operator, we examine whether increasing the representation dimension $d$ provides greater expressive capacity and improves performance.
The results show that performance is relatively robust once the representation dimension is sufficiently large, with further increases providing no consistent benefit.
In particular, on the 8-tower task, only dimensions of $d \geq 256$ achieve nonzero success among the tested values.
We hypothesize that this limited benefit from further increasing $d$ arises because the encoder $\psi$ already provides sufficient \textit{nonlinear} expressivity for the linear operator $R$ to model temporal relationships effectively.

\clearpage
\subsection{CRL with larger discount factors}

\begin{figure}[H]
    \centering
    \includegraphics[width=0.7\linewidth]{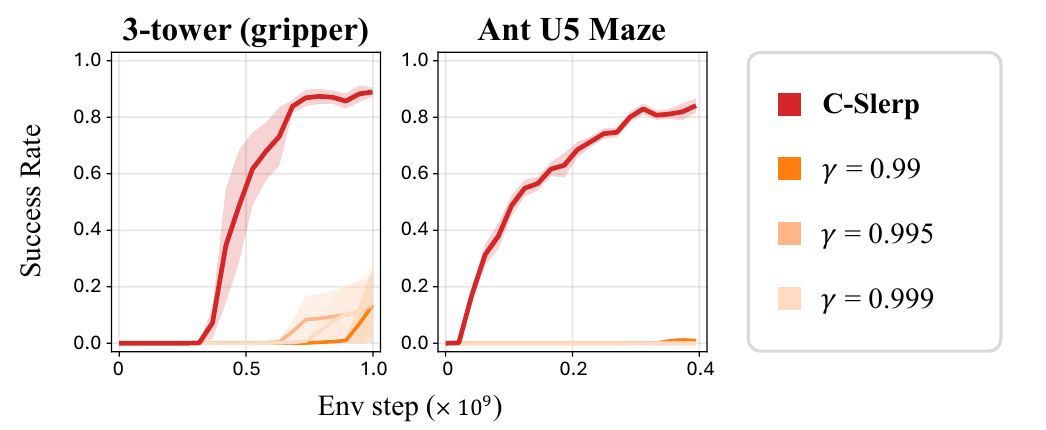}
    \vspace{-5pt}
    \caption{\textbf{Comparison with CRL with larger discount factors $\gamma$.}}
    \vspace{-5pt}
    \label{fig:crl_gamma_ablation}
\end{figure}

\paragraph{Comparison with CRL using larger discount factors $\gamma$.}
As discussed in \Cref{sec:preliminaries}, CRL samples future states for contrastive learning using a temporal offset drawn from $\operatorname{\textsc{Geom}}(1-\gamma)$.
Thus, a \textit{natural} approach for adapting CRL to long-horizon tasks is to \textit{increase} the discount factor $\gamma$.

We investigate this approach by evaluating CRL across $\gamma \in \{0.99, 0.995, 0.999\}$ on long-horizon tasks within our benchmark, while C-Slerp uses $\gamma=0.99$.
As shown in Figure~\ref{fig:crl_gamma_ablation}, CRL achieves low success rates across the evaluated tasks, and increasing $\gamma$ does not close the performance gap with C-Slerp.
One possible explanation is that a larger $\gamma$ places more weight on temporally \textit{distant} future states.
This may make it harder for the network to distinguish temporally close and distant goals, limiting the effectiveness of the learned representations.
These results suggest that simply extending the temporal horizon of contrastive learning does not offer the same benefits as planning with an appropriate number of waypoints.

\clearpage
\subsection{Visualizations}
\paragraph{Evolution of $K$ and $i^*$ over time.}
Figures~\ref{fig:KI_selection} and~\ref{fig:KI_selection_2} show the mean and sampled values of $K+1$, together with the selected waypoint index $i^*$, over a single episode in each environment.
Since $K$ denotes the number of intermediate waypoints, $K+1$ represents the total number of transition segments from the current state to the goal.
We plot $K+1$ to align with the waypoint indexing, where the goal has index $K+1$, allowing direct comparison with the selected index $i^*$.

In general, the mean value of $K+1$ decreases as the agent approaches the goal, suggesting that the learned representation captures the remaining temporal distance to the goal.
Notably, in Humanoid U Maze, the selected $K+1$ values increase again after the humanoid reaches the goal, as it subsequently falls and cannot recover, causing its torso height to deviate from the target.

\paragraph{Visualizations of environment rollouts with corresponding representations.}
Figures~\ref{fig:rollout_1} and \ref{fig:rollout_2} visualize the environment rollouts together with spherical PCA~\citep{liu2019spherical} projections of the corresponding latent representations.
Across the evaluated environments, these visualizations illustrate how C-Slerp adjusts the number of waypoints and selects a latent subgoal for the policy to pursue as it progresses toward the goal.

\paragraph{Visualization of waypoint representations in maze environments.}
Figure~\ref{fig:waypoint_energy} visualizes the inferred waypoints $z_{1:K}$ and the scores $f_L$ over the maze.
We first encode all $(x,y)$ positions on a grid with $0.2$\,m spacing using the state encoder $\psi$.
For each inferred waypoint $z_i$, we retrieve its nearest neighbor among these representations and plot the corresponding position in the maze.
We also evaluate $f_L(s,a,s')=\phi(s,a)^\top\psi(s')$ at every grid point and display the resulting scores as a heatmap, providing a proxy for how easily the current policy can reach each position.
The visualizations indicate that, although the waypoints are generated entirely in the latent space, they correspond to intermediate $(x,y)$ positions along the path toward the goal.

\clearpage

\begin{figure}[t]
    \centering
    \includegraphics[width=0.95\linewidth]{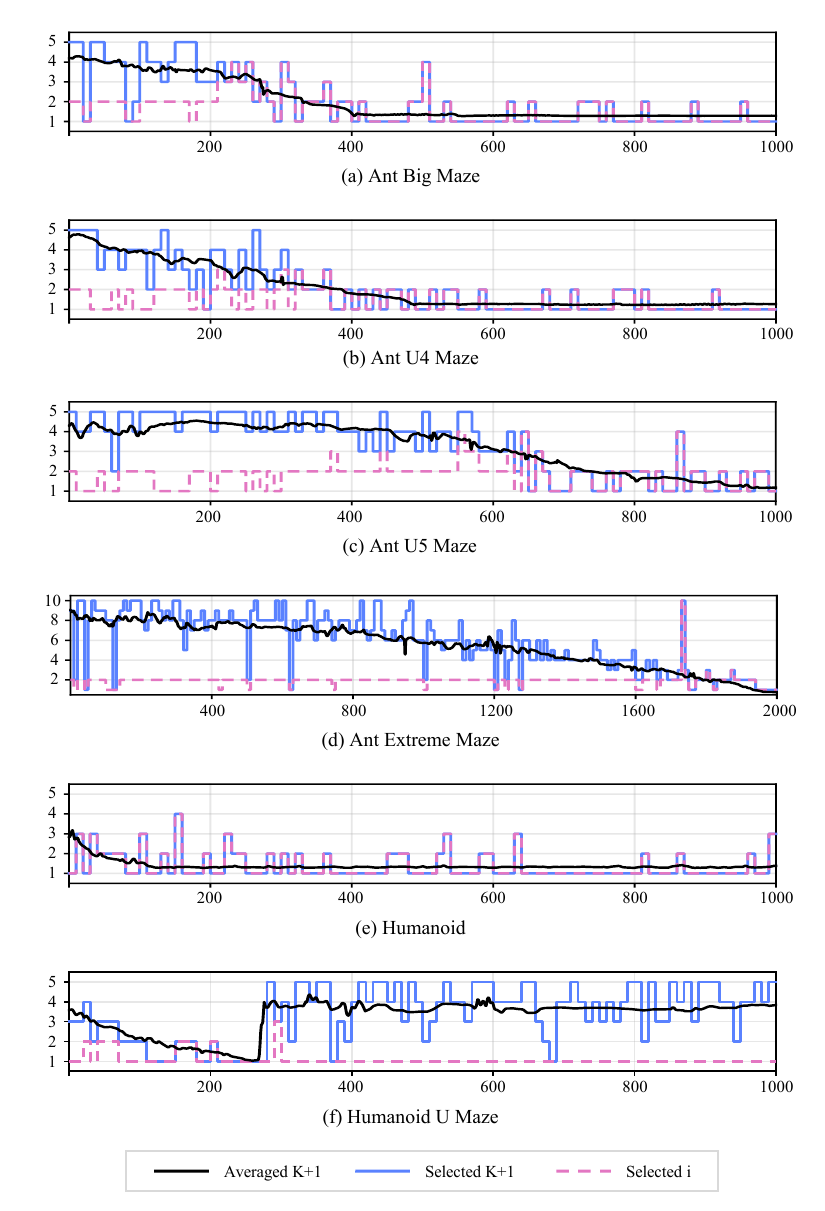}
    \vspace{-5pt}
    \caption{\textbf{Evolution of $K$ and $i^*$ over time in JaxGCRL environments.}
    We plot the mean and sampled values of $K+1$, together with the selected waypoint index $i^*$, against environment steps.
    As the agent approaches the goal, the predicted number of waypoints decreases.}
    \vspace{-5pt}
    \label{fig:KI_selection}
\end{figure}

\clearpage
\begin{figure}[t]
    \centering
    \includegraphics[width=0.95\linewidth]{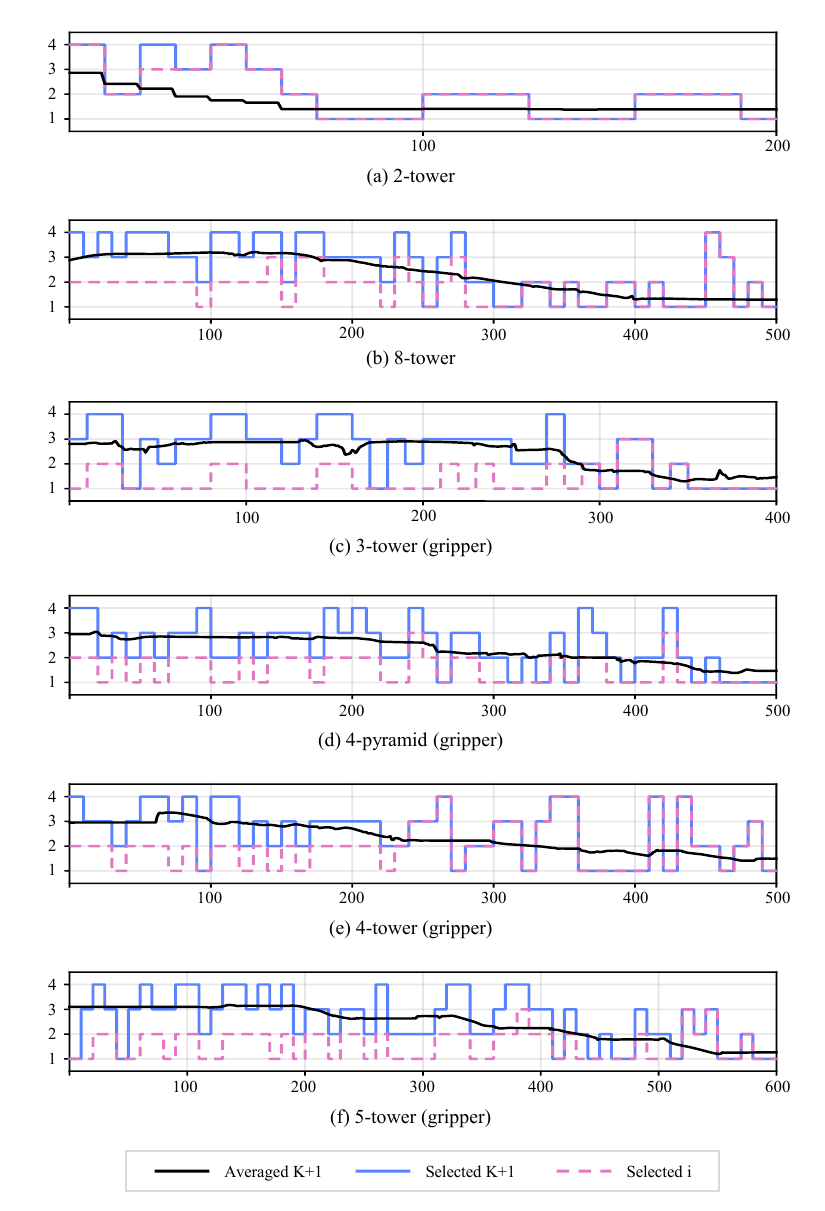}
    \vspace{-5pt}
    \caption{\textbf{Evolution of $K$ and $i^*$ over time in BuilderBench environments.}
    We plot the mean and sampled values of $K+1$, together with the selected waypoint index $i^*$, against environment steps.
    As the agent approaches the goal, the predicted number of waypoints decreases.}
    \vspace{-5pt}
    \label{fig:KI_selection_2}
\end{figure}

\clearpage
\begin{figure}[t]
    \centering
    \includegraphics[width=1.0\linewidth]{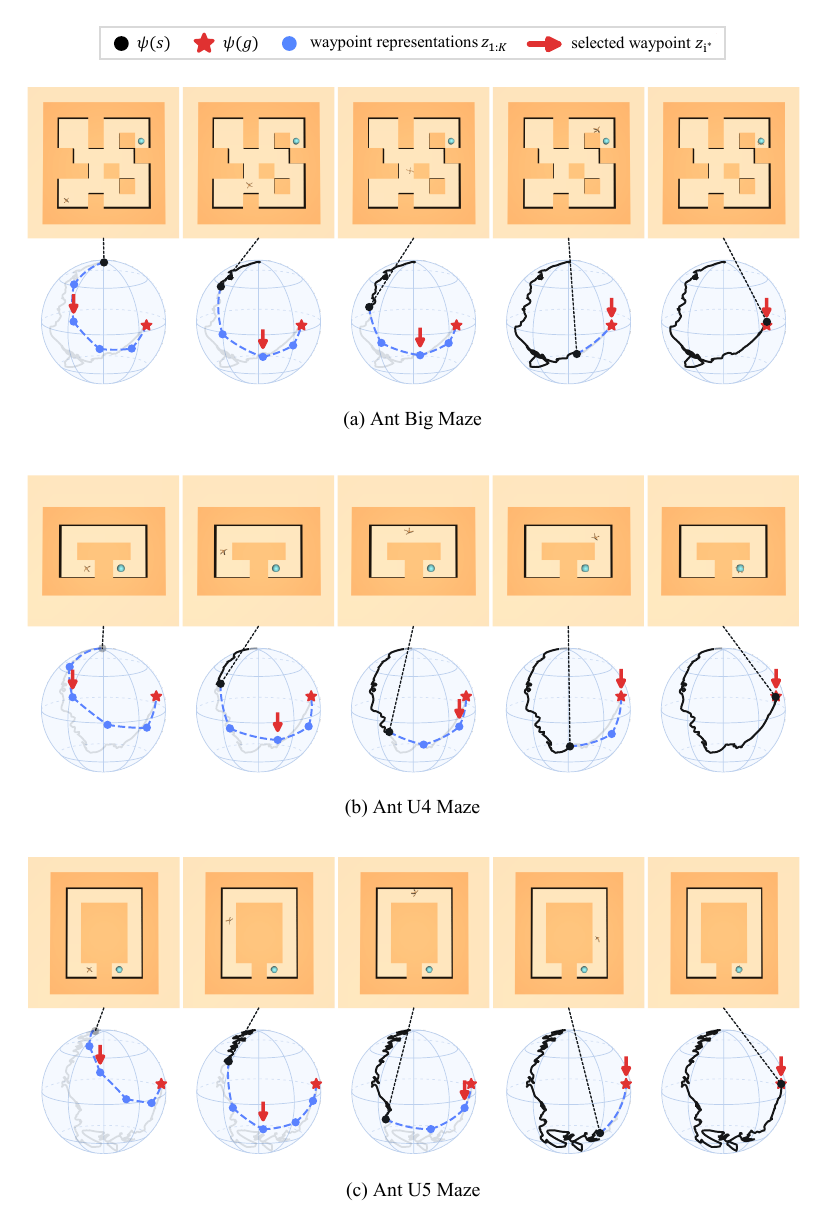}
\end{figure}

\clearpage
\begin{figure}[t]
    \centering
    \includegraphics[width=1.0\linewidth]{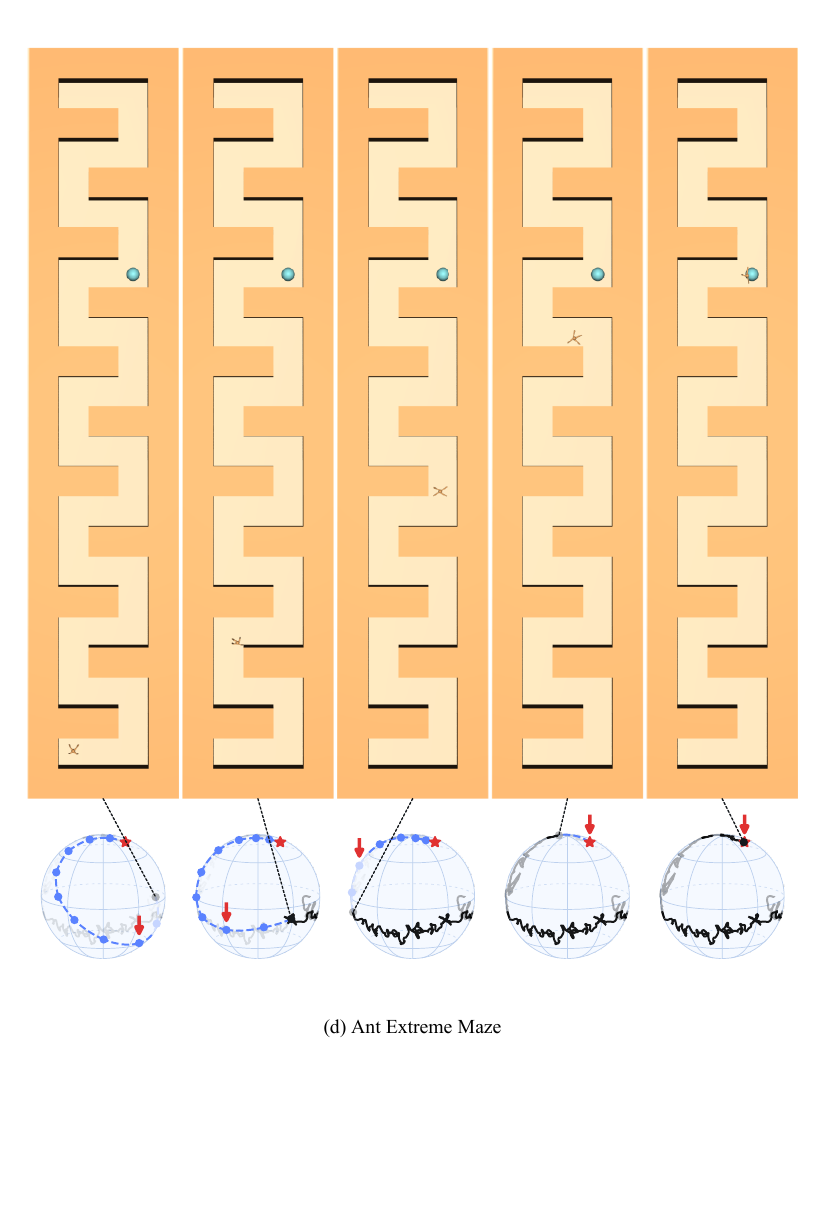}
\end{figure}

\clearpage
\begin{figure}[t]
    \centering
    \vspace{-60pt}
    \includegraphics[width=1.0\linewidth]{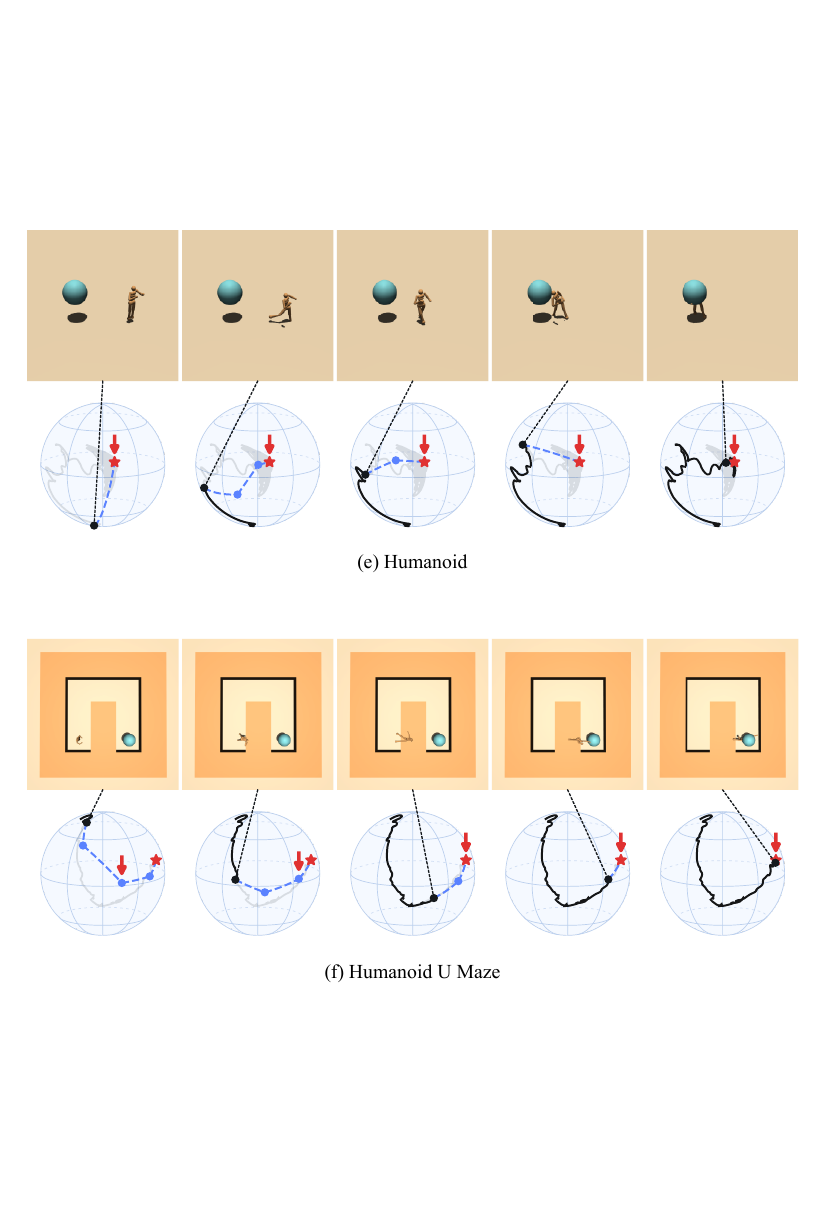}
    \vspace{-100pt}
    \caption{\textbf{Visualization of C-Slerp rollouts and corresponding latent representations in JaxGCRL.}
    For each environment, the top row shows successive states, and the bottom row shows spherical PCA~\citep{liu2019spherical} projections of the corresponding state representations and inferred waypoints.}
    \vspace{-5pt}
    \label{fig:rollout_1}
\end{figure}

\clearpage
\begin{figure}[t]
    \centering
    \includegraphics[width=1.0\linewidth]{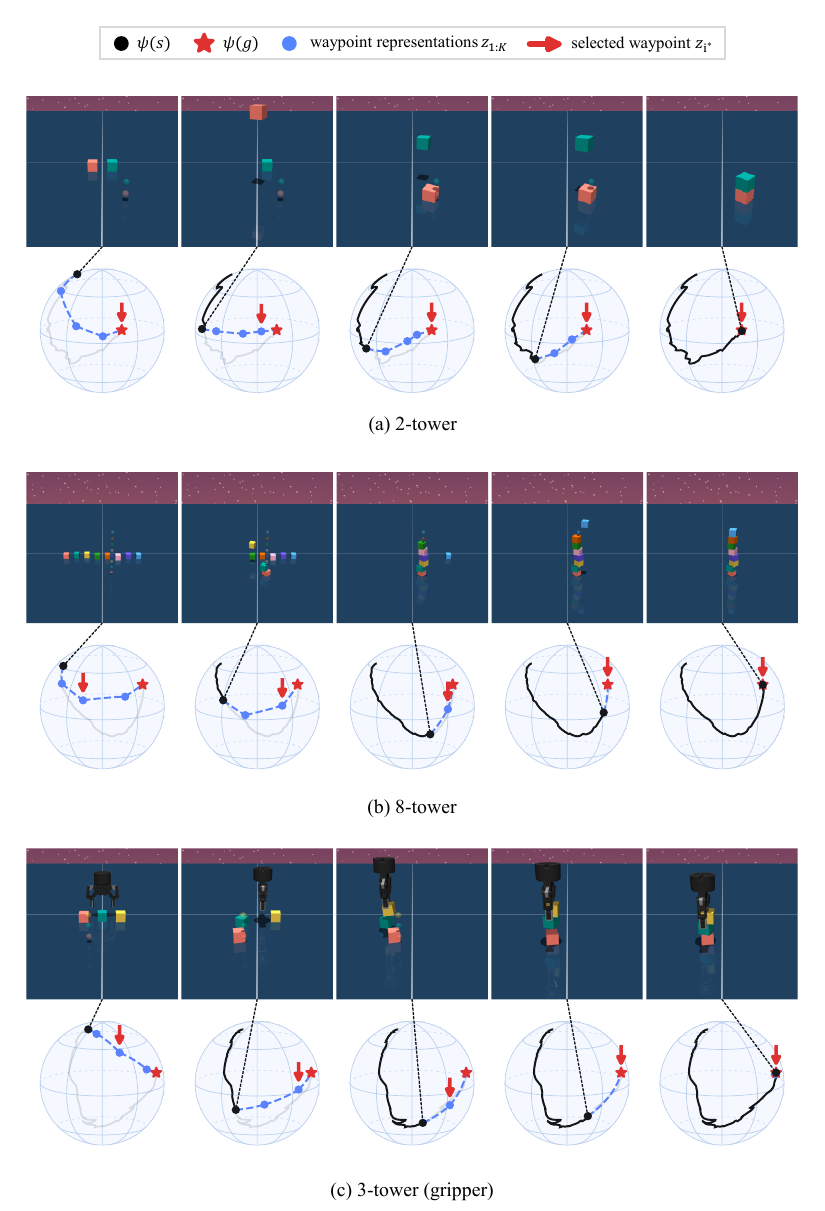}
\end{figure}

\clearpage
\begin{figure}[t]
    \centering
    \includegraphics[width=1.0\linewidth]{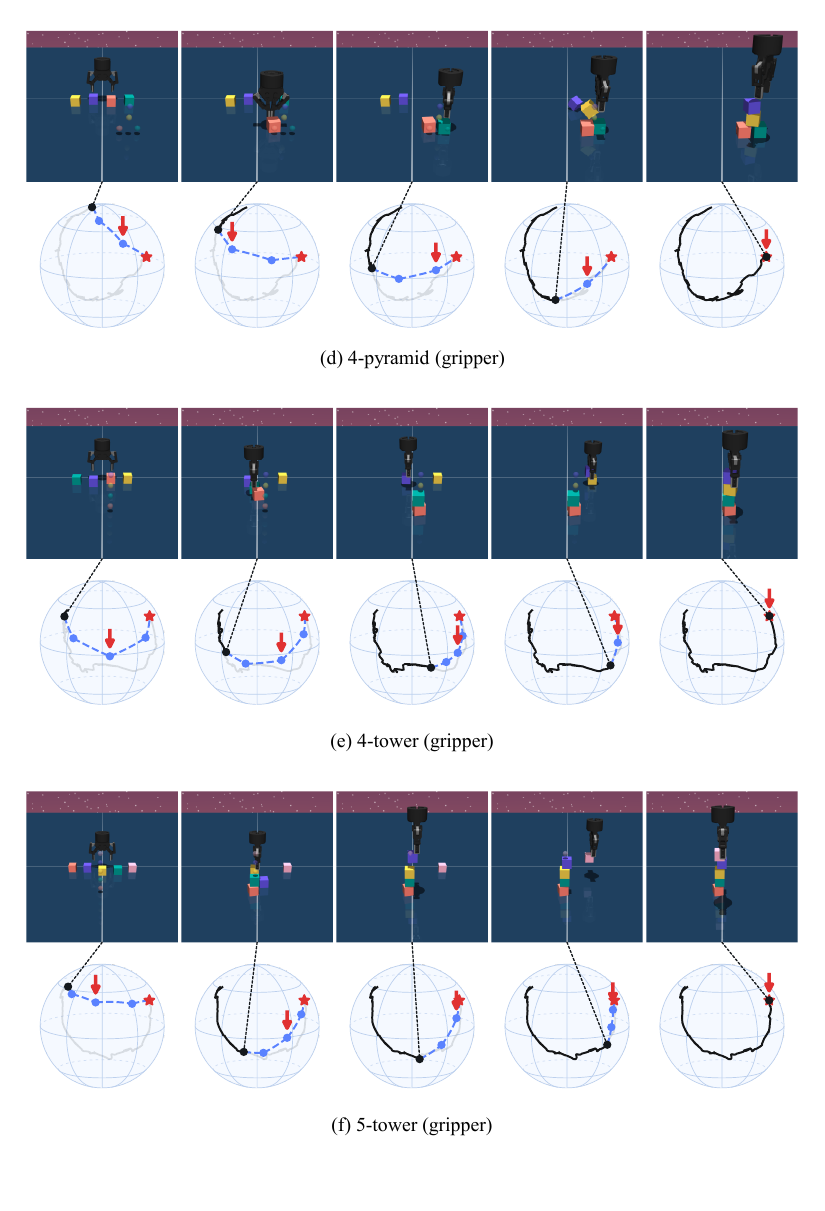}
    \vspace{-25pt}
    \caption{\textbf{Visualization of C-Slerp rollouts and corresponding latent representations in BuilderBench.}
    For each environment, the top row shows successive states, and the bottom row shows spherical PCA~\citep{liu2019spherical} projections of the corresponding state representations and inferred waypoints.}
    \vspace{-5pt}
    \label{fig:rollout_2}
\end{figure}

\clearpage
\begin{figure}[t]
    \centering
    \includegraphics[width=1.0\linewidth]{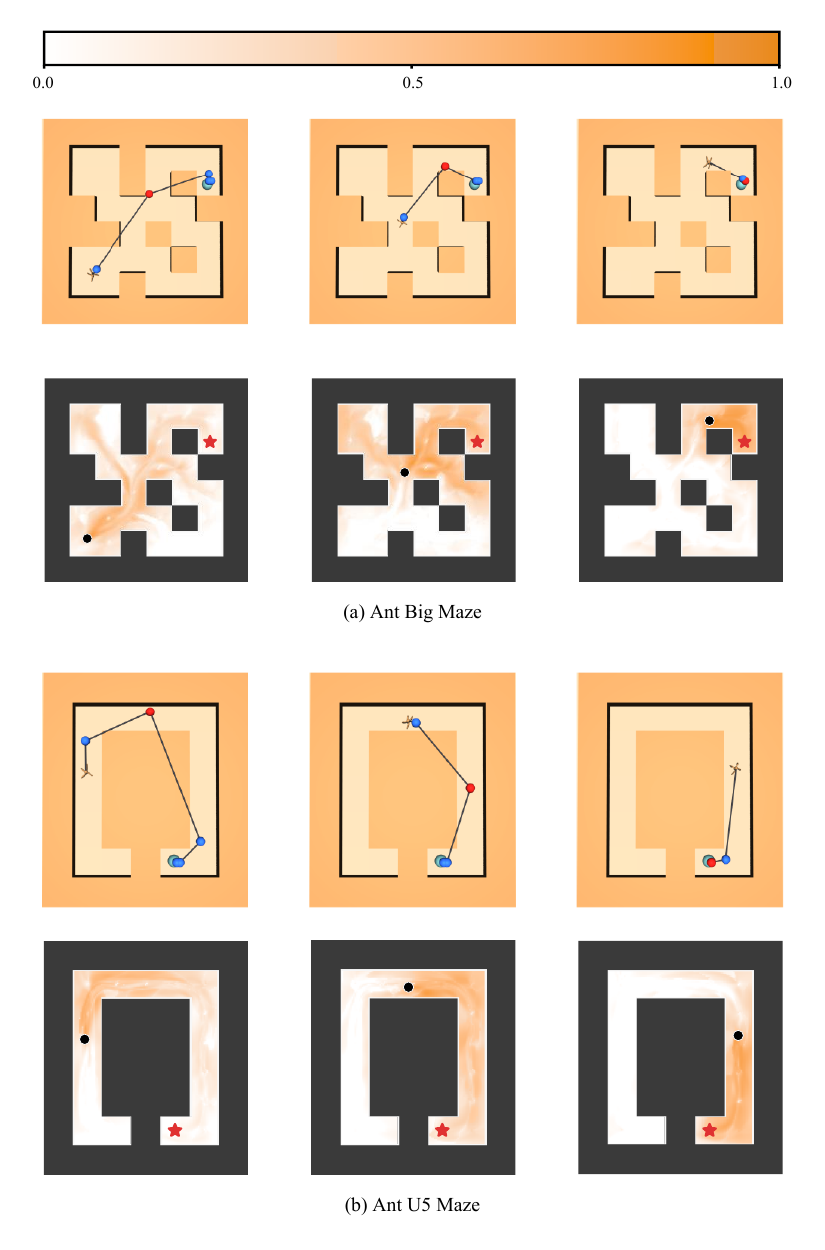}
    \vspace{-15pt}
\end{figure}

\clearpage
\begin{figure}[t]
    \centering
    \vspace{-25pt}
    \includegraphics[width=1.0\linewidth]{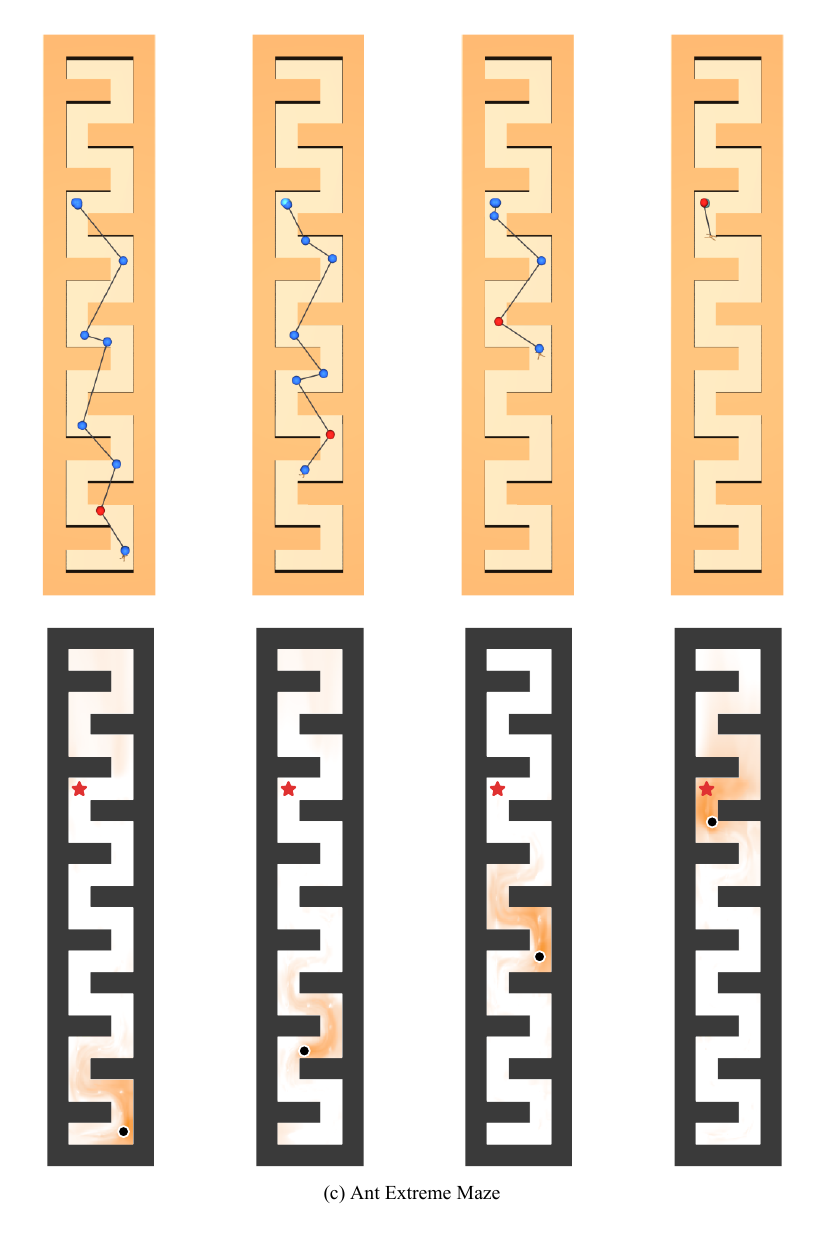}
    \vspace{-30pt}
    \caption{\textbf{Visualization of waypoint representations in maze environments.} 
    We visualize the inferred latent waypoints $z_{1:K}$ and the corresponding score heatmaps. 
    For each environment, the top row shows the $(x,y)$ positions corresponding to the inferred latent waypoints, where \raisebox{-0.37ex}{\includegraphics[height=2.0ex]{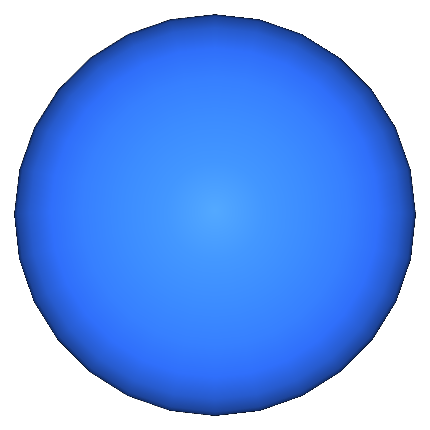}} denotes a candidate waypoint $z_i$, \raisebox{-0.37ex}{\includegraphics[height=2.0ex]{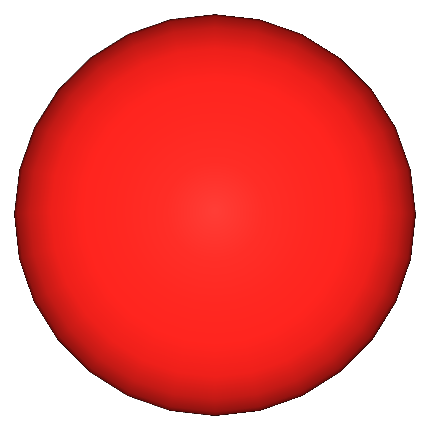}} highlights the selected waypoint $z_{i^*}$, and \raisebox{-0.37ex}{\includegraphics[height=2.0ex]{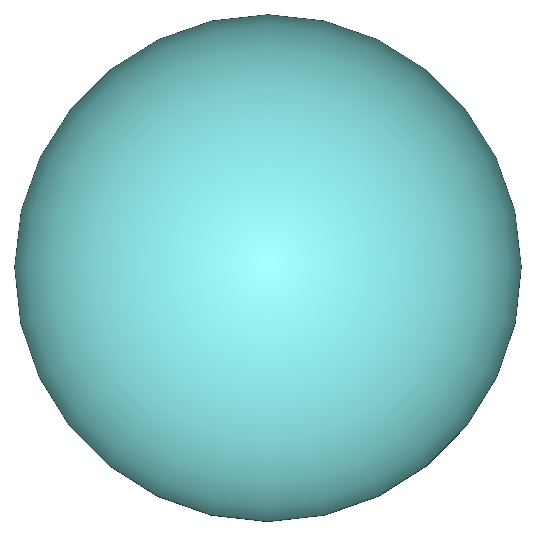}} represents the goal. 
    The bottom row shows heatmaps of $f_L(s,a,s')=\phi(s,a)^\top\psi(s')$, with the current state marked by \raisebox{-0.37ex}{\includegraphics[height=2.0ex]{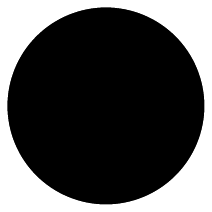}} and the goal indicated by \raisebox{-0.88ex}{\includegraphics[trim={18pt 18pt 18pt 18pt}, clip, height=3ex]{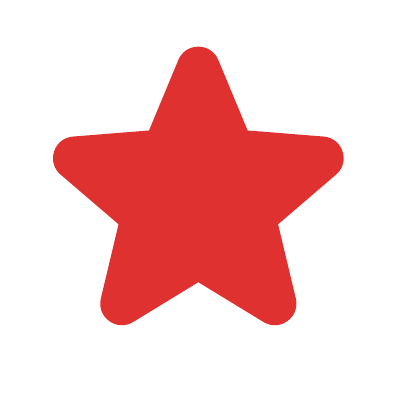}}.
    }
    \vspace{-5pt}
    \label{fig:waypoint_energy}
\end{figure}

\clearpage
\section{Experimental Details}
\label{sec:experimental_details}
We implement C-Slerp on top of the publicly available JaxGCRL~\citep{bortkiewicz2025accelerating} and BuilderBench~\citep{ghugare2025builderbench} codebases. 
For baselines, we implement CRL~\citep{eysenbach2022contrastive}, RIS~\citep{chane-sane2021ris}, an online version of HIQL~\citep{park2023hiql}, L3P~\citep{zhang2021l3p}, SAC+HER~\citep{haarnoja2018sac,andrychowicz2017hindsight} on the same codebase as C-Slerp.
We run our experiments on a local machine consisting of A6000 GPUs, and each run in Section~\ref{sec:experiments} takes 12-72 hours depending on the environment.
All training metric curves are averaged over 5 seeds, and the shaded region denotes one standard error.
We provide the environment and evaluation details in Appendix~\ref{sec:environments}, and the implementation details in Appendix~\ref{sec:implementation_details}.

\subsection{Environments}
\label{sec:environments}

\begin{figure}[t]
    \centering
    \includegraphics[width=1.0\linewidth]{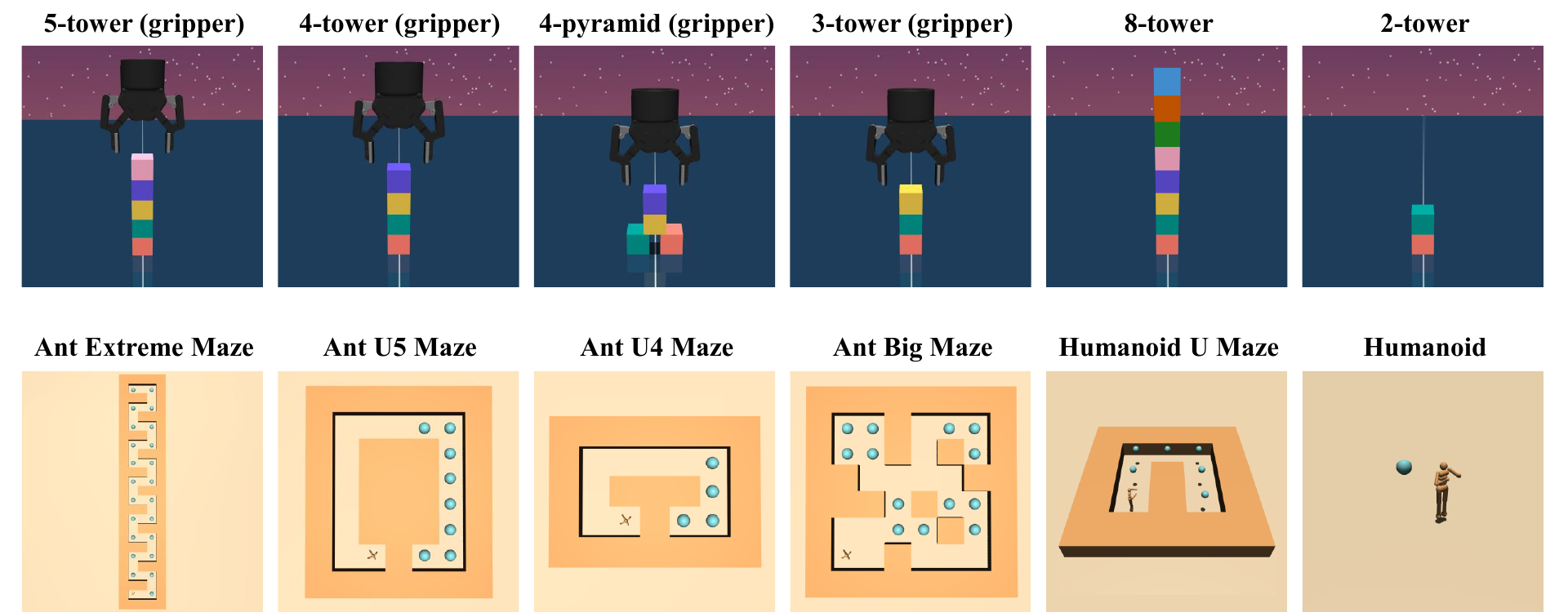}
    \vspace{-15pt}
    \caption{\textbf{Visualization of benchmark environments.}
    The top row shows the BuilderBench manipulation tasks, and the bottom row shows the JaxGCRL navigation tasks.
    The goal used for evaluation in each JaxGCRL environment is indicated by \raisebox{-0.37ex}{\includegraphics[height=2.0ex]{figures/appendixes/caption_markers_fig15/goal_teal_white.png}}. We provide videos demonstrating C-Slerp on all benchmark tasks on our \href{https://jonghaepark.github.io/c-slerp}{\textcolor{magenta}{project page}}.}
    \label{fig:env_explanation}
\end{figure}

\paragraph{JaxGCRL.}
We evaluate our method on goal-conditioned navigation tasks in JaxGCRL, including Humanoid, Humanoid U Maze, Ant Big Maze, Ant U4 Maze, and Ant U5 Maze. 
To extend our evaluation to substantially longer-horizon problems, we additionally design Ant Extreme Maze, inspired by \citet{myers2025horizon}.

The states consist of the torso’s global position and proprioceptive observations, including joint positions and velocities. The goals specify the desired torso position, and the actions correspond to joint-control commands.
For Ant and Humanoid, respectively, the states have 29 and 268 dimensions, the goals indicate the desired global torso coordinates $(x,y)$ and $(x,y,z)$, and the actions have 8 and 17 dimensions.
For the maze tasks, training goals are sampled uniformly from valid goal locations, whereas evaluation uses the goals shown as markers in Figure~\ref{fig:env_explanation}.
The maximum episode length is 1,000 environment steps, except in Ant Extreme Maze, where the limit is 2,000 steps.

\paragraph{BuilderBench.}
We evaluate our method on gripper-based manipulation tasks with three to five cubes in BuilderBench.
Each environment consists of $N$ cubes and a PD-controlled gripper. 
The gripper state consists of its 3-dimensional position, 4-dimensional orientation, 3-dimensional linear velocity, and 2 finger positions, giving 12 dimensions. 
Each cube contributes 13 dimensions consisting of its 3-dimensional position, 4-dimensional orientation, 3-dimensional linear velocity, and 3-dimensional angular velocity, yielding a $(12+13N)$-dimensional full state. 
Goals consist of the desired $(x,y,z)$ position of each cube and are $3N$-dimensional. 
The action space has 5 dimensions, controlling gripper opening, Cartesian motion, and yaw.
At each reset, the initial cube positions are randomized, and a randomly sampled $(x,y)$ offset is added to the desired position of the goal structure. 
To accommodate the longer manipulation sequences required as the number of cubes $N$ increases, we use $100+100N$ environment steps per episode during training and $200+200N$ environment steps per episode during evaluation.

Following \citet{yanscaling}, we additionally evaluate a modified BuilderBench environment in which the policy selects a cube and applies force and torque \textit{directly} to it without using a gripper. 
Each cube has the same 13 pose and velocity components as in the gripper environment, and an additional scalar indicates the selected cube, yielding a $(1+13N)$-dimensional state. 
Goals specify the desired $(x,y,z)$ position of each cube and are $3N$-dimensional. 
Each 5-dimensional action is represented as $(f_x,f_y,f_z,T_z,i)$, where $(f_x,f_y,f_z)$ denotes the applied force, $T_z$ denotes the yaw torque, and $i$ identifies the cube receiving the actuation. 
As in the gripper environment, the initial cube positions are randomized, and the same randomly sampled $(x,y)$ offset is added to the desired position of the goal structure. 
To account for the longer manipulation sequences required as the number of cubes increases, we set the maximum episode length to $100+50N$ environment steps during both training and evaluation.

\paragraph{Metrics.}
For both JaxGCRL and BuilderBench, we report the fraction of evaluation episodes in which the agent meets the success criterion for each task.
In JaxGCRL environments, success is achieved when the torso is within $0.5$\,m of the goal position. 
In BuilderBench environments, success requires all cubes to be within $0.01$\,m of their target heights and the structure to lie within $0.04$\,m of its target $(x,y)$ position.
Each cube has an edge length of $0.04$\,m. 

For the four-cube pyramid task, we also require the \textit{horizontal} arrangement of the cubes to match the target structure.
We measure each cube's horizontal position relative to the center of the bottom layer and compare it with the corresponding position in the target structure.
The allowed error is approximately $0.01$\,m for bottom-layer cubes and $0.03$\,m for upper-layer cubes.
The center of the bottom layer must also be within $0.04$\,m of its target position.

\subsection{Implementation details}
\label{sec:implementation_details}
During the learning process, we adjust the entropy coefficients $\alpha$ to match the target entropies using the automatic tuning procedure of SAC~\citep{haarnoja2018sac}. 
The number of waypoints $K$ is sampled from the softmax distribution in \Cref{eq:k_selection_distribution} with a temperature of $\eta=0.1$ during both data collection and evaluation.
Separately, to control the sharpness of the softmax distributions in the InfoNCE loss, $f_H$ and $f_L$ are divided by a temperature of $\tau_{\mathrm{NCE}}=0.1$ during training.
We also adopt the sampling strategy of \citet{ziarko2026contrastive}, which introduces in-trajectory negatives by sampling multiple positive pairs from each trajectory, with the number of pairs controlled by a repetition factor.
For BuilderBench, we set the repetition factor to 12.
For JaxGCRL, we use the default sampling strategy implemented in the codebase.
We provide the hyperparameters of C-Slerp in Table~\ref{tab:aplanning_hparams} and the source code on our \href{https://jonghaepark.github.io/c-slerp}{\textcolor{magenta}{project page}}.

Algorithm~\ref{alg:aplanning} provides the full pseudocode for C-Slerp.
Every $h$ environment steps, the high-level planner determines the number of waypoints and selects a subgoal for the low-level policy to pursue for the next $h$ steps.
The resulting transitions are stored in the replay buffer and used to update the representations and policy.

\paragraph{Baseline methods.}
We implement all baselines within the same training and evaluation framework as our method.
We closely follow each method's original training objectives and algorithm-specific components while using shared infrastructure and matching the capacity of the main networks where appropriate.

For CRL~\citep{eysenbach2022contrastive}, we closely follow the implementation provided in each benchmark's codebase, retaining its network architecture and L2-based energy function, while tuning the learning rates.

For RIS~\citep{chane-sane2021ris}, we follow the original diagonal Laplace subgoal
parameterization and goal-relabeling scheme. We use the rollout goal, an achieved goal sampled
independently from the replay buffer, and a future achieved goal from the same trajectory with
probabilities $0.2$, $0.4$, and $0.4$, respectively. Future achieved goals are sampled uniformly
from later states within the same trajectory. We draw 10 samples from the imagined-subgoal prior
and use the resulting subgoals only during training. At evaluation time, the policy is conditioned
directly on the final task goal.

For HIQL~\citep{park2023hiql}, we adapt the original offline RL algorithm to our online setting by
replacing the fixed offline dataset and pretraining stage with trajectories collected through
online interaction and updates from the resulting replay buffer. We preserve the
action-free expectile value-learning objective, the high- and low-level advantage-weighted policy
objectives, and the latent subgoal prediction used by the hierarchical policy.
For value learning, we use the current state, a future achieved goal from the same
trajectory, and an achieved goal sampled independently from the replay buffer with probabilities
$0.2$, $0.5$, and $0.3$, respectively. Future achieved goals are sampled from later states within
the same trajectory according to a geometric distribution with parameter $1-\gamma$.
For the high-level objective, we sample an achieved goal either from the future of the same trajectory or independently from the replay buffer with probabilities $0.7$ and $0.3$, respectively.
The target horizon $K$ determines how subgoals are selected during training.
The low-level policy is trained to reach a state selected from the next $K$ steps of the same trajectory, while the high-level policy is trained to generate subgoals using targets selected within the same horizon.
We set $K=25$ in JaxGCRL and $K=10$ in BuilderBench.

For SAC+HER~\citep{haarnoja2018sac,andrychowicz2017hindsight}, goal relabeling uses a future
achieved goal from the same trajectory with probability $0.8$ and the original rollout goal with
probability $0.2$. Future achieved goals are sampled uniformly from later states within the same
trajectory, and rewards are recomputed with respect to the resulting goal. Goal relabeling does
not alter the original environment termination or discount signal.

For L3P~\citep{zhang2021l3p}, the training-time planning graph contains 50 learned landmarks
and 150 additional replay-based landmarks, whereas evaluation uses only the 50 learned landmarks.
Landmark pairs are scored using directed reachability estimates, so the retained graph topology
depends on the learned values and the distance-clipping threshold rather than on a fixed edge set.
L3P's hindsight relabeling uses a uniformly sampled future achieved goal from the same trajectory
with probability $0.85$, with the future state restricted to at most 100 steps ahead. The graph is
refreshed every $5\times10^5$ environment interactions, and planning during data collection is
enabled after $5\times10^6$ interactions. To improve robustness in the online setting, each learned
latent component is represented by a state selected from the replay buffer rather than directly by
a decoded latent centroid. This ensures that planning landmarks correspond to states observed in
the environment and avoids centroids that may fall in invalid or unreachable regions.

For model architecture, CRL, RIS, SAC+HER, and L3P use the same backbone as our method for their
main policy and value-learning networks. For HIQL, we preserve its component-specific structure.
The representation and value networks use three hidden layers, the high-level policy uses two hidden layers, and the low-level policy matches the number of hidden layers in the policy used by C-Slerp for each benchmark.
To improve performance, both HIQL policies use half as many units per hidden layer as our method's policy. HIQL uses a 10-dimensional latent subgoal representation,
and L3P uses a 16-dimensional latent space. Unless otherwise specified, the baselines follow the
same shared training settings as our method.

For baseline tuning, we use settings reported in the original papers and implementations as
starting points and vary learning rates and method-specific parameters to identify the
best-performing configuration for each baseline and benchmark.
Table~\ref{tab:baseline_tuning} summarizes the candidate values considered during tuning.
Once selected, each configuration is fixed for all final evaluations.
Tables~\ref{tab:crl_hparams}--\ref{tab:l3p_hparams} report the selected method-specific
parameters used in our experiments.

\clearpage

\begin{algorithm}[t]
\caption{Contrastive Planning via Spherical Linear Interpolation (C-Slerp)}
\label{alg:aplanning}
\begin{algorithmic}[1]
\State Initialize policy $\pi_\theta(a\mid s,z)$,
encoders $\phi(s,a)$ and $\psi(s)$,
rotation matrix $R$, replay buffer $\mathcal{D}$
\vspace{0.1em}
\For{each training epoch}
    \vspace{0.5em}
    \item[] \hspace*{1.2em} {\color{SphereBlue}\texttt{// Data collection}}
    \vspace{0.5em}
    \For{each episode}
        \State $t \gets 0$
        \vspace{0.1em}
        \State Sample an initial state $s_0 \sim p_0$ and a goal $g \sim p_g$
        \While{the episode is not finished}
            \vspace{0.1em}
            \If{$t \mathbin{\%} h = 0$}
    \vspace{0.5em}
    \item[] \hspace*{5.6em} {\color{SphereBlue}\texttt{// K-selection}}
    \vspace{0.5em}
                \State Sample $K$ for $(s_t,g)$ using \Cref{eq:k_selection_distribution}
                \vspace{0.1em}
                \For{$i=1,\ldots,K$}
                    \State Compute $(\mu_i^\star,\sigma_i^{\star 2})$
                    using \Cref{eq:closed_form_high_policy}
                    \vspace{0.1em}
                    \State Sample $\tilde z_i \sim
                    \mathcal{N}(\mu_i^\star,\sigma_i^{\star 2}I)$
                    \vspace{0.1em}                
                    \State Set $z_i=\tilde z_i/\|\tilde z_i\|_2$
                \EndFor
                \vspace{0.1em}
                \State Set $z_{K+1}=\psi(g)$
    \vspace{0.5em}
    \item[] \hspace*{5.6em} {\color{SphereBlue}\texttt{// i-selection}}
    \vspace{0.5em}
                \For{$i=1,\ldots,K+1$}
                    \State Sample $\tilde a_i \sim
                    \pi_\theta(\cdot\mid s_t,z_i)$
                    \vspace{0.1em} 
                    \State Compute $\rho_i=\phi(s_t,\tilde a_i)^\top z_i$
                \EndFor
                \vspace{0.1em}
                \If{$\max_{i\in\{1,\ldots,K+1\}}\rho_i \geq \epsilon$}
                    \vspace{0.1em} 
                    \State Select $i^\star$ using \Cref{eq:waypoint_index_selection}
                \Else
                    \State Set $i^\star=1$
                \EndIf
                \vspace{0.1em} 
                \State Set the subgoal $z=z_{i^\star}$
                \vspace{0.1em}
            \EndIf
    \vspace{0.5em}
    \item[] \hspace*{4.1em}  {\color{SphereBlue}\texttt{// Policy rollout}}
    \vspace{0.5em}
            \State Sample $a_t \sim \pi_\theta(\cdot\mid s_t,z)$
            \vspace{0.1em} 
            \State Execute $a_t$ and observe $s_{t+1}$
            \vspace{0.1em} 
            \State Store $(s_t,a_t,s_{t+1})$ in $\mathcal{D}$
            \vspace{0.1em} 
            \State $t \gets t+1$
        \EndWhile
    \EndFor
    \vspace{0.5em}
    \item[] \hspace*{1.2em} {\color{SphereBlue}\texttt{// Representation and policy learning}}
    \vspace{0.5em}
    \For{each gradient step}
        \State Sample a batch
        $\mathcal{B}=\{(s_j,a_j,s_j^+)\}_{j=1}^{B}$ from $\mathcal{D}$
        \vspace{0.1em} 
        \State Update $\phi,\psi,R$ by minimizing
        $\mathcal{L}_{\mathrm{repr}}$ in \Cref{eq:repr_loss}
        \vspace{0.1em} 
        \State Update $\pi_\theta$ using \Cref{eq:policy_objective}
    \EndFor
\EndFor
\end{algorithmic}
\end{algorithm}

\clearpage

\begin{table}[t]
\centering
\small
\caption{Learning hyperparameters.}
\label{tab:aplanning_hparams}
\vspace{6pt}
\begin{tabular}{lcc}
\hline
Parameter & JaxGCRL & BuilderBench \\
\hline
Actor learning rate & $10^{-4}$ & $10^{-4}$ \\
Representation learning rate & $10^{-4}$ & $10^{-4}$ \\
Actor entropy learning rate & $3{\times}10^{-4}$ & $3{\times}10^{-4}$ \\
Waypoint entropy learning rate & $3{\times}10^{-3}$ & $3{\times}10^{-3}$ \\
Batch size & $512$ & $1024$ \\
Representation dimension $d$ & $128$ & $512$ \\
Network depth & $8$ & $16$ \\
Network width & $512$ & $1024$ \\
Discount factor $\gamma$ & $0.99$ & $0.99$ \\
InfoNCE temperature $\tau_{\mathrm{NCE}}$ & $0.1$ & $0.1$ \\
$\lambda_{\mathrm{repr}}$ & $4.0$ & $2.0$ \\
Control horizon $h$ & $10$ & $10$ \\
Maximum planning stages $K_{\max}+1$ & $5$ ($10$ for Ant Extreme Maze) & $4$ \\
$K$ sampling temperature $\eta$ & $0.1$ & $0.1$ \\
$i$-selection threshold $\epsilon$ & $0.4$ & $0.4$ \\
In-trajectory repetition factor & - & $12$ \\
Policy target entropy& $-|\mathcal{A}|$ & \begin{tabular}[c]{@{}c@{}}
    $-2|\mathcal{A}|$ \\
    ($-|\mathcal{A}|$ with gripper)
  \end{tabular} \\
Waypoint target entropy
& $-2.5(d-1)$
& \begin{tabular}[c]{@{}c@{}}
    $-3.5(d-1)$ \\
    ($-3.0(d-1)$ with gripper)
  \end{tabular} \\
\hline
\end{tabular}
\end{table}

\begin{table*}[t]
\centering
\small
\caption{Hyperparameter search spaces for baseline tuning.}
\label{tab:baseline_tuning}
\vspace{6pt}
\begin{tabular}{lll}
\hline
Method & Parameter & Search space \\
\hline
CRL
& Actor learning rate & $\{3\times10^{-4}, 2\times10^{-4}, 10^{-4}\}$ \\
& Critic learning rate & $\{3\times10^{-4}, 2\times10^{-4}, 10^{-4}\}$ \\
\hline
RIS
& Actor learning rate & $\{3\times10^{-5}, 10^{-4}, 3\times10^{-4}\}$ \\
& Critic learning rate & $\{10^{-4}, 3\times10^{-4}, 10^{-3}\}$ \\
& KL coefficient & $\{0.1, 0.3\}$ \\
& Prior samples  & $\{3, 10\}$ \\
\hline
SAC+HER
& Actor/critic learning rate & $\{10^{-4}, 3\times10^{-4}\}$ \\
& Target entropy & $\{-0.5|\mathcal{A}|, -1|\mathcal{A}|, -2|\mathcal{A}|\}$ \\
\hline
Online HIQL
& Learning rate  & $\{10^{-4}, 2\times10^{-4},  3\times10^{-4}\}$ \\
& High-level coefficient $\beta_{\mathrm{high}}$ & $\{0.1, 0.2, 0.3, 0.5, 1.0\}$ \\
& Low-level coefficient $\beta_{\mathrm{low}}$ & $\{0.3, 0.4, 0.5, 1, 100\}$ \\
& Target horizon $K$ & $\{3, 5, 10, 25\}$ \\
& Direct-goal bootstrap fraction & $\{0, 0.025\}$ \\
& Random-action warm-up & $\{0, 1000, 2000\}$ \\
\hline
L3P
& Actor/critic/value learning rate & $\{10^{-4}, 2\times10^{-4}, 3\times10^{-4}\}$ \\
& Discount factor $\gamma$ & $\{0.98, 0.99\}$ \\
& Action regularization & $\{0.05, 0.1\}$ \\
& Learned landmarks & $\{50, 100\}$ \\
& Distance-clipping threshold & $\{-20, -15\}$ \\
& Graph refresh interval & $\{10^5, 5\times10^5\}$ \\
& Exploration noise & $\{0.1, 0.2\}$ \\
& Random-action probability & $\{0.05, 0.2\}$ \\
\hline
\end{tabular}
\end{table*}

\clearpage

\begin{table}[t]
\centering
\small
\caption{CRL-specific parameters.}
\label{tab:crl_hparams}
\vspace{6pt}
\begin{tabular}{lcc}
\hline
Parameter & JaxGCRL & BuilderBench \\
\hline
Actor learning rate & $3\times10^{-4}$ & $3\times10^{-4}$ \\
Critic learning rate & $3\times10^{-4}$ & $3\times10^{-4}$ \\
Energy function & L2 norm & L2 norm \\

\hline
\end{tabular}
\end{table}

\begin{table}[t]
\centering
\small
\caption{RIS-specific parameters.}
\label{tab:ris_hparams}
\vspace{6pt}
\begin{tabular}{lcc}
\hline
Parameter & JaxGCRL & BuilderBench \\
\hline
Actor/subgoal learning rate & $10^{-4}$ & $10^{-4}$ \\
Critic learning rate & $10^{-4}$ & $10^{-3}$ \\
KL coefficient & 0.1 & 0.1 \\
Bottleneck temperature & 0.1 & 0.1 \\
Prior samples & 10 & 10 \\
\hline
\end{tabular}
\end{table}

\begin{table}[t]
\centering
\small
\caption{SAC+HER-specific parameters.}
\label{tab:sac_her_hparams}
\vspace{6pt}
\begin{tabular}{lcc}
\hline
Parameter & JaxGCRL & BuilderBench \\
\hline
Actor/critic learning rate & $10^{-4}$ & $10^{-4}$ \\
Entropy learning rate & $3\times10^{-4}$ & $3\times10^{-4}$ \\
HER probability & 0.8 & 0.8 \\
Initial entropy coefficient & 1.0 & 0.01 \\
Target entropy coefficient & $-0.5$ & $-1.0$ \\
\hline
\end{tabular}
\end{table}

\begin{table}[t]
\centering
\small
\caption{Online HIQL-specific parameters.}
\label{tab:hiql_hparams}
\vspace{6pt}
\begin{tabular}{lcc}
\hline
Parameter & JaxGCRL & BuilderBench \\
\hline
Learning rate & $3\times10^{-4}$ & $10^{-4}$ \\
Expectile & 0.7 & 0.7 \\
$\beta_{\mathrm{high}}$ & 0.2 & 0.5 \\
$\beta_{\mathrm{low}}$ & 0.3 & 100 \\
Target horizon $K$ & 25 & 10 \\
Direct-goal bootstrap fraction & 0 & 0 \\
Random-action warm-up & 0 & 0 \\
Representation dimension & 10 & 10 \\
\hline
\end{tabular}
\end{table}

\begin{table}[t]
\centering
\small
\caption{L3P-specific parameters.}
\label{tab:l3p_hparams}
\vspace{6pt}
\begin{tabular}{lcc}
\hline
Parameter & JaxGCRL & BuilderBench \\
\hline
Actor/critic/value learning rate & $10^{-4}$ & $10^{-4}$ \\
Autoencoder/cluster learning rate & $3\times10^{-4}$ & $3\times10^{-4}$ \\
Target-network update period & 10 learner steps & 10 learner steps \\
Latent dimension & 16 & 16 \\
Action regularization & 0.05 & 0.05 \\
Learned landmarks & 50 & 50 \\
Additional training landmarks & 150 & 150 \\
Graph refresh interval & $5\times10^5$ & $5\times10^5$ \\
Planning activation & $5\times10^6$ & $5\times10^6$ \\
Distance-clipping threshold & $-20$ & $-20$ \\
Soft-planning temperature & 1.1 & 1.1 \\
Value-iteration steps & 20 & 20 \\
Local planning horizon & 10 & 10 \\
Exploration noise & 0.2 & 0.2\\
Random-action probability & 0.2 & 0.2\\
\hline
\end{tabular}
\end{table}

\end{document}